\DeclareMathOperator*{\argmax}{arg\,max}
\DeclareMathOperator*{\argmin}{arg\,min}
\newtheorem{prop}{Proposition}
\newtheorem{theorem}{Theorem}
\newtheorem{lemma}{Lemma}
\title{A Differentiable Relaxation of Graph Segmentation \\  and Alignment  for AMR Parsing} 
\author{ {Chunchuan Lyu}$^1$  ~~ {Shay B. Cohen}$^1$ ~~ {Ivan Titov}$^{1,2}$ 
 \smallskip \\
      $^1$ {ILCC, School of Informatics, University of Edinburgh}  \\
    $^2$ {ILLC, University of Amsterdam}
 \smallskip \\
 {\tt chunchuan.lv@gmail.com} ~~ {\tt scohen@inf.ed.ac.uk} ~~ {\tt ititov@inf.ed.ac.uk}  \\
}
\date{}
\newcommand*{\addFileDependency}[1]{
  \typeout{(#1)}
  \@addtofilelist{#1}
  \IfFileExists{#1}{}{\typeout{No file #1.}}
}
\newcommand*{\myexternaldocument}[1]{%
    \externaldocument{#1}%
    \addFileDependency{#1.tex}%
    \addFileDependency{#1.aux}%
}
\begin{document}

\setlength{\abovedisplayskip}{1pt}
\setlength{\belowdisplayskip}{1pt}
\setlength{\abovedisplayshortskip}{1pt}
\setlength{\belowdisplayshortskip}{1pt}
\maketitle
\begin{abstract}
Abstract Meaning Representations (AMR) are a broad-coverage semantic formalism which represents sentence meaning as a directed acyclic graph. To train most AMR parsers, one needs to segment the graph into subgraphs and align each such subgraph to a word in a sentence; this is normally done at preprocessing, relying on hand-crafted rules. In contrast, we treat both alignment and segmentation as latent variables in our model and induce them as part of end-to-end training.
 As marginalizing over the structured latent variables is infeasible, we use the variational autoencoding framework. 
 To ensure end-to-end differentiable optimization, we introduce a differentiable relaxation of the segmentation and alignment problems.
We observe that inducing segmentation yields substantial gains over using a `greedy' segmentation heuristic. The performance of our method also approaches that of a model that relies on the segmentation rules of \citet{lyu-titov-2018-amr}, which were hand-crafted to handle individual AMR constructions.
\end{abstract}

\section{Introduction}

Abstract Meaning Representation (AMR; \citealt{Banarescu13abstractmeaning}) is a broad-coverage semantic formalism which represents sentence meaning as rooted labeled directed acyclic graphs. 
The representations have been exploited in a wide range of 
tasks, including text summarization~\cite{Liu2015TowardAS,dohare2017text,Hardy2018GuidedNL}, machine translation~\cite{Jones2012SemanticsBasedMT,Song2019SemanticNM}, paraphrase detection~\cite{Issa2018AbstractMR} and question answering~\cite{Mitra2016AddressingAQ}. 

An AMR graph can be regarded as consisting of 
multiple concept subgraphs, which can be individually
aligned to sentence tokens~\cite{Flanigan2014ADG}.  
 In Figure~\ref{fig:amr-example}, each dashed box represents the boundary of a single semantic subgraph. Red arrows represent the alignment between subgraphs and tokens.
 For example, `(o / opine-01: ARG1 (t / thing))' refers to a combination of the predicate `opine-01' and a filler of its semantic role ARG1.  Intuitively, this subgraph needs to be aligned to the token `opinion'. Similarly, `(b / boy)' should be aligned to the token `boy'. Given such an alignment and segmentation, it is straightforward to construct a simple  parser: parsing can be framed as tagging input tokens with subgraphs (including empty subgraphs), followed by predicting relations between the subgraphs. 
 The key obstacle to training such an AMR parser is that the segmentation
 and alignment between AMR subgraphs and words are latent, i.e. not annotated in the data.
 

\begin{figure}[t!]
\centering
\includegraphics[width=0.8\columnwidth]{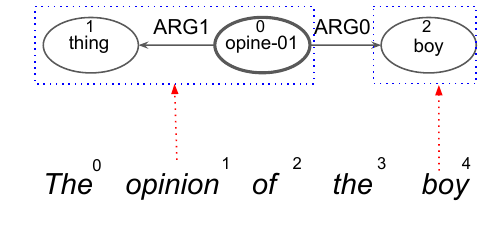}
\vspace{-2ex}
\caption{An example of AMR, the dashed red arrows mark latent alignment. Dashed blue boxes represent the latent segmentation.} 
\label{fig:amr-example}
\end{figure}%

Most previous work adopts a pipeline approach to handling this obstacle. They rely on a pre-learned aligner~(e.g., \cite{pourdamghani-etal-2014-aligning}) to produce the alignment, and apply a rule system to segment the AMR subgraph~\cite{Flanigan2014ADG,Werling2015RobustSG,Damonte2017AnIP,Ballesteros2017AMRPU,peng-etal-2015-synchronous,Artzi2015BroadcoverageCS,Groschwitz2018AMRDP}. 
While ~\newcite{lyu-titov-2018-amr} jointly optimize the parser and the alignment model, 
the rules handling specific constructions  still needed to be crafted to segment the graph. 
The segmentation rules are relatively complex --- e.g.,  the rules of \newcite{lyu-titov-2018-amr} targeted 40 different
AMR subgraph types --- and language-dependent. AMR has never been intended to be used as an interlingua~\cite{Banarescu13abstractmeaning,damonte-18} 
and AMR banks for individual languages substantially diverge from English AMR. For example, Spanish AMR represents
pronouns and ellipsis differently from the English one~\cite{migueles-abraira-etal-2018-annotating}.
 As new AMR sembanks in languages other than English are being  developed~\cite{anchieta-pardo-2018-towards,song-etal-2020-construct},
 domain-specific AMR extensions get developed \cite{bonn-etal-2020-spatial,bonial-etal-2020-dialogue},
 and extra constructions are getting introduced to AMRs~\cite{bonial-etal-2018-abstract}, eliminating the need for rules while learning graph segmentation from scratch is becoming an important problem to solve. 

We propose to optimize a graph-based parser that treats the alignment and graph segmentation  as latent variables. The graph-based parser consists of two parts: concept identification and relation identification. The concept identification model generates the AMR nodes, and the relation identification component decides on the labeled edges. During training, both components rely on latent alignment and segmentation, which are being induced simultaneously.  Importantly, at test time, the parser simply tags the input with the subgraphs and predicts the relations, so there is no test-time overhead from using the latent-structure apparatus. An extra benefit of this approach, in contrast to encoder-decoder AMR models~\cite{Konstas2017NeuralAS,Noord2017NeuralSP,Cai2020AMRPV} is its transparency, as one can readily see which input token triggers each subgraph.\footnote{The code is available at \url{https://github.com/ChunchuanLv/graph-parser}.}

To develop our parser,  we frame the alignment and segmentation problems as choosing a {\it generation order} of concept nodes, as we explain in Section~\ref{sec:generation_order}.  As marginalization over the latent generation orders is infeasible, we adopt the variational auto-encoder (VAE) framework~\cite{kingma2013auto}. 
Intuitively, a trainable neural module (an encoder in the VAE) is used to sample a plausible generation order (i.e., a segmentation plus an alignment), which is then used to train the parser (a decoder in the VAE). As
one cannot `differentiate through' a sample of discrete variables to train the encoder, we introduce a differentiable relaxation which makes our objective end-to-end differentiable.

We experiment on the AMR 2.0 and 3.0 datasets. 
We compare to a {\it greedy segmentation} heuristic, inspired by ~\citet{Naseem2019RewardingST}, that produces a segmentation deterministically and provides a strong baseline to our segmentation induction method. We  also use a version of our model with segmentation induction replaced by a hand-crafted rule-based segmentation system from previous work;\footnote{
We adopt the code from ~\newcite{Zhang2019AMRPA}, which is an extension of the system introduced by ~\newcite{lyu-titov-2018-amr}, and also used by ~\newcite{Cai2020AMRPV}.} it can be thought of as an upper bound on how well induction can work. On AMR 2.0 (LDC2016E25), we found that our VAE system obtained a competitive Smatch score of 76.1, reducing the gap between using the segmentation heristic (75.2) and the rules exploiting the prior knowledge about AMR  (76.8). On AMR 3.0 (LDC2020T02), the VAE system gets even closer to the rule-based system (75.5 vs 75.7), possibly because the rules were designed for AMR 2.0. 
Our main contributions are:
\begin{itemize}
\item we frame the alignment and segmentation problems as inducing a generation order, and provide a continuous relaxation to this discrete optimization problem; 
\item we empirically show that our method outperforms a strong heuristic baseline and approaches the performance of a complex hand-crafted rule system. 
\end{itemize}  

Our method makes very few assumptions about the nature of the graphs,
so it may be effective in other tasks that can be framed as graph prediction  (e.g., executable semantic parsing, \citealt{liang2016learning}, or scene graph
prediction, \citealt{xu2017scene}).



\section{Casting Alignment and Segmentation as Choosing a Generation Order} 
\subsection{Preliminaries}
\label{sec:seg_notation}

We start by introducing the basic concepts and notation. We refer to words in a sentence as $\mathbf{x} = (x_0,\ldots,x_{n-1})$, where $n$ is the sentence length. 
 The concepts (i.e. labeled nodes) are $\mathbf{v} = (v_0,v_1, \ldots, v_m)$, where $m$ is the number of concepts.
In particular, $v_m= \emptyset $ denotes a dummy terminal node; its purpose will be clear in Section \ref{sec:generation_order} where we will define the generative model.  We refer to all nodes, except for the terminal node ($\emptyset$), as concept nodes. 

A relation between `predicate concept' $i$
and `argument concept' $j$ is denoted by $\mathbf{E}_{ij}$.  
It is set to $\emptyset $ if $j$ is not an argument of $i$.  We will use $\mathbf{E}$ to denote all edges (i.e. relations) in the graph. In addition, we refer to the whole AMR graph as $\mathbf{G}=(\mathbf{v},\mathbf{E})$. 

Our goal is to associate each input token with a (potentially empty) subset of the concept nodes in the AMR graph, while making sure that we get a {\it partition} of the node set. In other words, each node in the original AMR graph belongs to exactly one subset. In that way, we deal with both segmentation and alignment. 
Each subset uniquely corresponds to a {\it vertex-induced subgraph} (i.e., the subset of nodes together with any edges whose both endpoints are in this subset). For this reason, we will refer to the problem
as graph decomposition\footnote{We slightly abuse the terminology as, in graph theory,  graph decomposition usually refers to a partition of edges -- rather than nodes -- of the original graph.} 
and to each subset as a subgraph. 
We will explain how we deal with edges of the AMR graph in Section~\ref{subsect:rel}.

\subsection{Generation Order}  \label{sec:generation_order}
\begin{figure}[t!]
\centering
\includegraphics[width=0.8\linewidth]{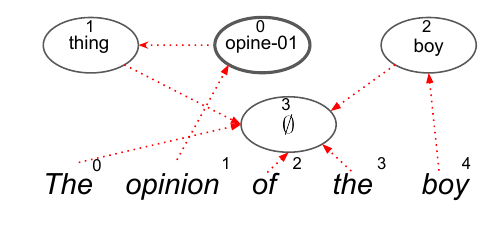}
\vspace{-2ex}
\caption{AMR concept identification model generates nodes following latent generation order at training time.} 
\label{fig:amr-order}
\centering
\includegraphics[width=0.8\linewidth]{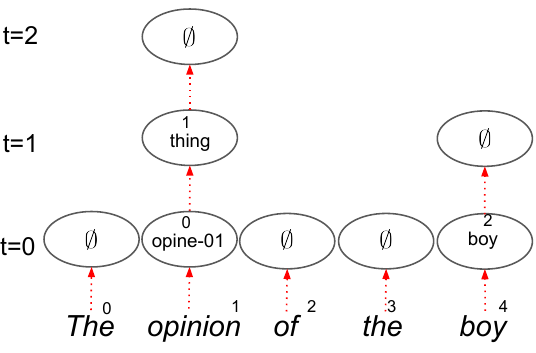}
\vspace{-2ex}
\caption{At test time, the AMR concept identification model generates 
nodes autoregressively, starting from each sentence token. Importantly, it is just an `unrolled' form of the order shown in Figure~\ref{fig:amr-order}.} 
\label{fig:seg_testing}
\end{figure}%


We choose a subset of nodes for each token by assigning an order to which the nodes are selected for such subset.  In Figure~\ref{fig:amr-order}, dashed red arrows point from every node to the subsequent node to be selected.  For example, given the word `opinion', the node `opine-01' is chosen first, and then it is followed by another node `thing'. After this node, we have an arrow pointing to the node $\emptyset $, signifying that we finished generating nodes aligned to the word `opinion'. We refer to these red arrows as a {\it generation order}. 

A generation order determines a graph decomposition. To recover it from a generation order, we assign connected nodes (excluding the terminal node) to the same subgraph. Then, a subgraph will be aligned to the token that generated those nodes. In our example, `opine-01' and `thing' are connected, and, thus, they are both aligned to the word `opinion'. The alignment is encoded by arrows between tokens and concept nodes, while the segmentation is represented by arrows between concept nodes.

From a modeling perspective, the nodes will be generated with an autoregressive model,
which is easy to use at test time  (Figure~\ref{fig:seg_testing}). From each token, a chain of nodes is generated until the stop symbol $\emptyset $  is predicted.  It is more challenging to see how to induce the order and train the autoregressive model at the same time; we will discuss this in Sections~\ref{sec:main}~and~\ref{sec:latent}.

\noindent {\bf Constraints} \quad
While in Figure~\ref{fig:amr-order} the red arrows determine a 
valid generation order, in general, the arrows have to obey certain constraints. Formally, we denote alignment by $\mathbf{A} \in  \{0,1\}^{ n \times (m +1)}$, where $\mathbf{A}_{ki}=1$ means that for token $k$ we start by generating node $i$. As the token can only point to one node, we have a constraint $\sum_i \mathbf{A}_{ki}=1$. 
Similarly, for a segmentation $\mathbf{S} \in  \{0,1\}^{ m \times (m +1)} $ we have a constraint $ \sum_{j}\mathbf{S} _{ij} = 1 $. Setting $\mathbf{S}_{ij} =1 $ indicates that node $i$ is followed by node $j$.  In Figure~\ref{fig:amr-order}, 
we have $\mathbf{A}_{03}=\mathbf{A}_{10}=\mathbf{A}_{23} = \mathbf{A}_{33}=\mathbf{A}_{42} = 1$ and $\mathbf{S}_{01}=\mathbf{S}_{13}=\mathbf{S}_{23} = 1$; the rest is $0$.
Now, we have the full generation order as their concatenation $ \mathbf{O}= [\mathbf{A} ; \mathbf{S}] \in  \{0,1\}^{ (n+m) \times (m +1)}$. As one node can only be generated once (except for $\emptyset$), we have a joint constraint: $\forall j \neq m, \sum_{l} \mathbf{O}_{lj} =1$.  Furthermore, the graph defined by $\mathbf{O}$ should be acyclic, as it represents the generative process. We denote the set of all valid generation orders as $\mathcal{O}$. 
In the following sections, we will discuss how this generation order is used in the model and how to infer it as a latent variable while enforcing the above constraints. 
\section{Our Model}\label{sec:main}
Formally, we aim at estimating $P_\theta(\mathbf{v},\mathbf{E}|\mathbf{x})$, the likelihood of an AMR graph given the sentence. Our graph-based parser is composed of two parts: concept identification $P_\theta(\mathbf{v}|\mathbf{x},\mathbf{O})$ and relation identification $P_\theta(\mathbf{E}|\mathbf{x},\mathbf{O},\mathbf{v})$. The concept identification model generates concept nodes, and the relation identification model assigns relations  between them. Both require the latent generation order at the training time,  denoted by $\mathbf{O}$. 
Overall, we have the following objective:
\begin{align}
   & \log P_\theta(\mathbf{v},\mathbf{E}|\mathbf{x})   \\
    = &\log \sum_{\mathbf{O} } {P_\theta(\mathbf{O}) P_\theta(\mathbf{v}|\mathbf{x},\mathbf{O}) P_\theta(\mathbf{E}|\mathbf{x},\mathbf{O},\mathbf{v}) } ~\label{eq:joint_margin},
\end{align}
where $P_\theta(\mathbf{O})$ is a prior on the generation orders, discussed in Section~\ref{sec:soft}. 
To efficiently optimize this objective end-to-end, as will be discussed in Section~\ref{sec:latent},
we need to ensure that both concept and relation identification models admit relaxation, i.e., they should be well-defined for real-valued $\mathbf{O}$.  


In the following subsections, we go through concept identification, relation identification, and their corresponding relaxations.

\subsection{Concept Identification}

\begin{figure}[t!]
\centering
\includegraphics[width=0.9\linewidth]{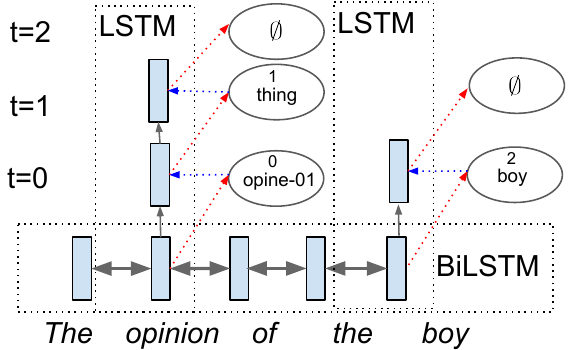}
\caption{AMR concept identification model runs several independent LSTMs to generate nodes autoregressively at test time.} 
\label{fig:seg_lstm}
\end{figure}%

 As shown in Figure~\ref{fig:seg_lstm}, our neural model first encodes the sentence with $\mathrm{BiLSTM}$, producing token representations  $\mathbf{h}^{\mathrm{token}}_k$ ($k \in [0,\ldots n-1]$), 
  then generates nodes autoregressively at each token with another $\mathrm{LSTM}$. 
 
 In training, we need to be able to run the models with any potential generation order and compute $P_\theta(\mathbf{v}|\mathbf{x},\mathbf{O})$. 
 If we take the order defined in Figure~\ref{fig:amr-order},
 the node 1 (`thing') is predicted 
 relying on the corresponding hidden representation;
 we refer to this representation as  $\mathbf{h}^{\mathrm{node}}_1$ where is $1$ is the node index. With the discrete generation order  defined by red arrows in Figure~\ref{fig:amr-order}, $\mathbf{h}^{\mathrm{node}}_1$  is just the LSTM state
 of its parent (i.e. `opine-01'). However, to admit relaxations, our computation should be well-defined when the generation order $\mathbf{O}$ is soft (i.e. attention-like). 
 In that case, $\mathbf{h}^{\mathrm{node}}_1$ will be a weighted sum of LSTM representations of other nodes and input tokens, where the weights are defined by $\mathbf{O}$.
 Similarly, the termination symbol $\emptyset$ for the token `opinion' is predicted from its hidden representation; 
 we refer to this representation as  $\mathbf{h}^{\mathrm{tail}}_1$, where $1$ is the position of `opine' in the sentence. With the hard generation order of Figure~\ref{fig:amr-order}, $\mathbf{h}^{\mathrm{tail}}_1$ is just the LSTM state computed after choosing the preceding node (i.e. `thing'). In the relaxed case, it will again be a weighted sum with the weights defined by $\mathbf{O}$.
 
\label{sec:seg_concept_generation}
Formally, the probability of concept identification step can be decomposed into probability of generating $m$ concepts nodes and $n$ terminal nodes (one for each token):
\begin{align}
    P_\theta(\mathbf{v}|\mathbf{x},\!\mathbf{O}\!)\!\!=\!\!\prod_{i=0}^{m-1}\!\!P_\theta(\mathbf{v}_i|\mathbf{h}^{\mathrm{node}}_i) \!  
     \prod_{k=0}^{n-1}\!P_\theta(\emptyset |\mathbf{h}^{\mathrm{tail}}_k)
    \label{eq:prob}
\end{align}
Representation
$\mathbf{h}^{\mathrm{node}}_i$ is computed as the weighted sum of the LSTM states of preceding nodes as defined by $\mathbf{O}$ (recall that $\mathbf{O} = [ \mathbf{A} ; \mathbf{S} ]$):
\begin{align}
\nonumber
 \mathbf{h}^{\mathrm{node}}_i := 
 &  \sum_{j=0}^{m-1}  \mathbf{S}_{ji}\mathrm{LSTM}{(\mathbf{h}^{\mathrm{node}}_j,\mathbf{v}_j)} \\
     & +  \sum_{k=0}^{n-1} \mathbf{A}_{ki}\mathbf{h}^{\mathrm{token}}_k. \label{eq:concept_input}
\end{align}
Note that the preceding node can be either a concept node (then the output of the LSTM, consuming the preceding node, is used)
or a word (then we use its contextualized encoding). The first term in Equation~\ref{eq:concept_input} corresponds to the former situation, and the second one to the latter.   

Note that this expression is `recursive' -- each node's representation $\mathbf{h}^{\mathrm{node}}_i$ is computed based on representations of all the nodes $\mathbf{h}^{\mathrm{node}}_j$; $i, j \in 1, \ldots m-1$. 
 Iterating  the assignment defined by Equation~\ref{eq:concept_input} 
  for a valid {\bf discrete} generation order (i.e., a DAG, like the one given in Figure~\ref{fig:amr-order}), 
 will converge to a stationary point. Crucially, in this discrete case, the stationary point will be equal to the result of applying the autoregressive model (as used in test time, see  Figure~\ref{fig:seg_lstm}). The stationary point will be reached after $T$ steps, where $T$ is the number of nodes in the largest subgraph.\footnote{We use  $T=4$, as we do not expect subgraphs with more than 4 nodes.} 
 This `message passing' process is fully differentiable and, importantly, well-defined for a relaxed generation order where $\mathbf{A}_{ki}$ and  $\mathbf{S}_{ji}$ are non-binary.  The equivalence between the train-time message passing and the test-time autoregressive computation with discrete $\mathbf{O}$
prevents the gap between training and testing, as long as the optimization  converges to a near-discrete solution.

The representations $\mathbf{h}^{\mathrm{tail}}_k$,
needed for the terms
$P_\theta(\emptyset |\mathbf{h}^{\mathrm{tail}}_k)$ 
in Equation~\ref{eq:prob}, are computed as:
\begin{align}
\nonumber
 \mathbf{h}^{\mathrm{tail}}_k = &
  \sum_{j=0}^{m-1}  \mathbf{B}_{jk}\mathrm{LSTM}{(\mathbf{h}^{\mathrm{node}}_j,\mathbf{v}_j)} \\
      & +  (1 - \sum_{j=0}^{m-1}{ \mathbf{B}_{jk} })
     \mathbf{h}^{\mathrm{token}}_k, \label{eq:tail_comp}
\end{align}
where $\mathbf{B}_{jk}=1$  denotes that the concept node $j$ is the last concept node before generating $\emptyset$ for the token $k$, else $\mathbf{B}_{jk}=0$. E.g. in Figure~\ref{fig:amr-order}, we have $\mathbf{B}_{11}=\mathbf{B}_{42}=1$, and others are 0.
Again, in the discrete case, the result will be exactly equivalent to what is obtained by running the corresponding autoregressive model (as in test time,   Figure~\ref{fig:seg_lstm}), but the computation is also well-defined and differentiable in the relaxed cases, where $B_{jk}$ are real-valued.

While it is clear how $S_{ji}$ and $A_{ki}$ in Equation~\ref{eq:concept_input} and $B_{jk}$ in Equation~\ref{eq:tail_comp} are defined with discrete $\mathbf{O}= [\mathbf{A}; \mathbf{S}]$, we show
how they can defined with relaxed (non-binary) $\mathbf{O}$ in Appendix~\ref{append:concept}. The MLPs used to compute $P_\theta(\mathbf{v}_i|\mathbf{h}^{\mathrm{node}}_i) $ and $P_\theta(\emptyset|\mathbf{h}^{\mathrm{tail}}_i)$ are also defined there. 


\subsection{Relation Identification}
\label{subsect:rel}
Similarly to~\newcite{lyu-titov-2018-amr}, we use an arc-factored model for relation identification (i.e. predicting AMR edges): 
\begin{align}
&P_\theta(\mathbf{E}|\mathbf{x},\mathbf{O},\mathbf{v}) = \prod_{i,j=1}^m P_\theta(\mathbf{E}_{ij}|\mathbf{h}^{\mathrm{edge}}_i,\mathbf{h}^{\mathrm{edge}}_j)  
\end{align}
where $P_\theta(\mathbf{E}_{ij}|\mathbf{h}^{\mathrm{edge}}_i,\mathbf{h}^{\mathrm{edge}}_j)  $ is the softmax of the biaffine function of node representations  $\mathbf{h}^{\mathrm{edge}}_i$ and $\mathbf{h}^{\mathrm{edge}}_j$.
The  node representations are defined as \begin{align}\mathbf{h}^{\mathrm{edge}}_i=
 \mathrm{NN}^{\mathrm{edge}}(\mathbf{h}^{\mathrm{node}}_i\circ \sum_{k=0}^{n-1}\mathbf{A}^\mathrm{\infty}_{ki}\mathbf{h}^{\mathrm{token}}_k),\end{align}
where $\circ$ denotes concatenation, $\mathbf{h}^{\mathrm{node}}_i$ is defined in section ~\ref{sec:seg_concept_generation}, and $\mathbf{A}^\mathrm{\infty}_{ki}$ determines whether node $i$ is in a subgraph aligned to token $k$ or not. Note that this is different from $\mathbf{A}_{ki}$ which encodes that the node $i$ is the first node in the subgraph (e.g., in Figure~\ref{fig:amr-order},
$\mathbf{A}_{11} =0$ but $\mathbf{A}^\mathrm{\infty}_{11}=1$). In the continuous case, as  used during training, $\mathbf{A}^\mathrm{\infty}_{ki}$  can be thought of as the alignment probability that can be computed
from $\mathbf{O}$ (see Appendix~\ref{appendix:align_derivative}). 

\section{Estimating Latent Generation Order} \label{sec:latent}

\begin{figure}[t!]
\centering
\includegraphics[width=0.7\linewidth]{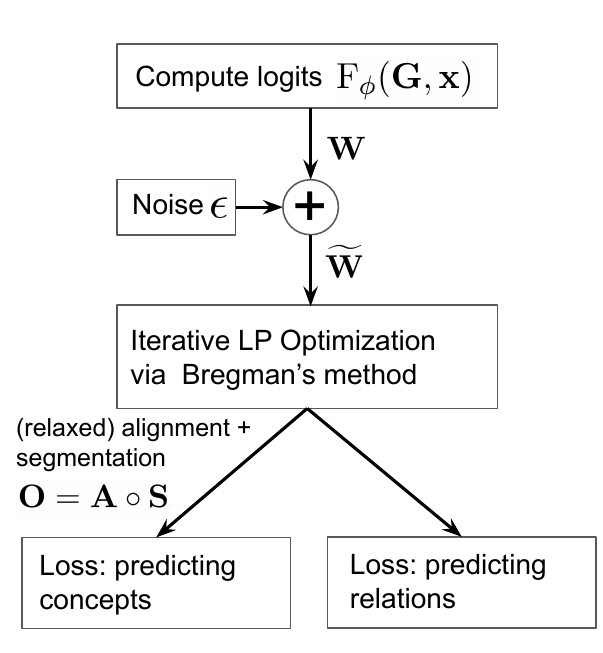}
\vspace{-2ex}
\caption{Overview of the computation graph.}
\label{fig:comp-graph}
\end{figure}%

We show how to estimate the latent generation order jointly with the parser, as also illustrated in Figure~\ref{fig:comp-graph}. 

\subsection{Variational Inference}
In Equation~\ref{eq:joint_margin}, marginalization over $\mathbf{O}$ is intractable due to the use of neural parameterization in $P_\theta(\mathbf{v}|\mathbf{x},\mathbf{O})$ and $P_\theta(\mathbf{E}|\mathbf{x},\mathbf{O},\mathbf{v})$. 
Instead, we resort to the variational auto-encoder (VAE) framework~\cite{kingma2013auto}. VAEs optimize a lower bound on the   marginal likelihood:
\begin{align}
   &\log \sum_{\mathbf{O} } P_\theta(\mathbf{O}) P_\theta(\mathbf{v}|\mathbf{x},\mathbf{O}) P_\theta(\mathbf{E}|\mathbf{x},\mathbf{O},\mathbf{v})   \nonumber \\
    \geq & \underset{\mathbf{O} \sim Q_\phi(\mathbf{O}|\mathbf{G},\mathbf{x}) }{\mathrm{E}} \log P_\theta(\mathbf{v}|\mathbf{x},\mathbf{O})P_\theta(\mathbf{E}|\mathbf{x},\mathbf{O},\mathbf{v}) \nonumber \\
    &- \mathrm{KL}(Q_\phi(\mathbf{O}|\mathbf{G},\mathbf{x})|| P_\theta(\mathbf{O})) ~\label{eq:elbo},
\end{align}
where $\mathrm{KL}$ is the  KL divergence, and  $Q_\phi(\mathbf{O}|\mathbf{G},\mathbf{x})$ (the encoder, aka the inference network) is a distribution parameterized with a neural network. The lower bound is maximized with respect to both the original parameters $\theta$ and the variational parameters $\phi$. The distribution $Q_\phi(\mathbf{O}|\mathbf{G},\mathbf{x})$ can be thought of as an approximation to the intractable posterior distribution
$P_\theta(\mathbf{O}|\mathbf{G},\mathbf{x})$.

\subsection{Stochastic Softmax}~\label{sec:soft}
In order to estimate the gradient
with respect to the encoder parameters $\phi$,
we use the perturb-and-MAP framework~\cite{Papandreou2011PerturbandMAPRF,hazan2012partition},
specifically the stochastic softmax~\cite{Paulus2020GradientEW}, which is
 a generalization the Gumbel-softmax trick~\cite{Jang2016CategoricalRW,Maddison2017TheCD} to the structured case.

With Stochastic Softmax, instead of sampling $\mathbf{O}$ directly, %
we independently compute logits $\mathbf{W} \in \mathbb{R}^{(n+m)\times(m+1)}$ for  all the potential edges in the generation order, and perturb them:
  \begin{align} \label{eq:logits}
   \mathbf{W} &= \mathrm{F}_\phi(\mathbf{G},\mathbf{x}) \\
  \widetilde{ \mathbf{W}} &=  \mathbf{W} + \mathbf{\epsilon},  \quad  \textrm{where} \quad \mathbf{\epsilon}_{ij} \sim  \mathcal{G}(0,1)
\end{align}
where $\mathrm{F}_\phi$ is a neural module computing the logits (see Section~\ref{sec:mask}), $\mathcal{G}(0,1)$ is the standard Gumbel distribution, and $\epsilon \in \mathbb{R}^{(n+m)\times(m+1)}$. Then, those perturbed logits $\widetilde{ \mathbf{W}}$ are fed into a constrained convex optimization problem:
  \begin{align} \label{eq:reg}
  &\mathbf{O}(\widetilde{ \mathbf{W}},\tau) := \argmax_{  \mathbf{O} \geq 0} \langle \widetilde{ \mathbf{W}}, \mathbf{O} \rangle -\tau \langle \mathbf{O}, \log \mathbf{O} \rangle \nonumber \\
    \mbox{s.t.}  &   \forall j < m \sum_{i=0}^{n+m-1} \mathbf{O}_{ij} =1 ;\forall i \sum_{j=0}^{m} \mathbf{O}_{ij} =1
\end{align}
This is a linear programming (LP) relaxation of constraints  discussed in Section~\ref{sec:generation_order}, where we permit continuous-valued  $\mathbf{O}$. Importantly, this LP relaxation is `tight', and 
ensures that
 $\mathbf{O}(\widetilde{ \mathbf{W}},0) $ is a valid generation order.\footnote{See proof in Appendix~\ref{append:valid}.
 } 

Now, as we will show in the next section,
the solution to this optimization 
$\mathbf{O}(\widetilde{ \mathbf{W}},\tau) $
can be obtained with a differentiable computation,
thus, we write:
\begin{align}
    \mathbf{O}_\phi(\mathbf{\epsilon},\mathbf{G},\mathbf{x}) =\mathbf{O}(\widetilde{ \mathbf{W}},\tau)
\end{align}  
The entropy regularizor, weighted by $\tau>0$ (`the temperature'), ensures differentiability  with respect to $\mathbf{W}$ and, thus, with respect to $\phi$, as needed to train the encoder. 

We still need to handle the $\mathrm{KL}$ term in Equation~\ref{eq:elbo}. We define the prior probability $P_\theta(\mathbf{O})$ implicitly by having $\mathbf{W}=0$ in the stochastic softmax framework. Even then, $\mathrm{KL}(Q_\phi(\mathbf{O}|\mathbf{G},\mathbf{x})|| P_\theta(\mathbf{O}))$ cannot be easily computed. Following~\newcite{Mena2018LearningLP}, we upper bound it by replacing it with $\mathrm{KL}(\mathcal{G}(  \mathbf{W},1)||\mathcal{G}(0,1))$, which is available in closed form. 

\subsubsection{Bregman's Method}
To optimize objective (\ref{eq:reg}) we iterate over the following steps of optimization: \begin{align}
    \mathbf{O}^{(0)} &= \exp \frac{ \widetilde{\mathbf{W}}}{\tau}  \label{eq:init}  \\
\forall j < m,\; \mathbf{O}^{(t+\frac{1}{2})}_{:,j} & =  \mathcal{T}(\mathbf{O}^{(t)}_{:,j}) \label{eq:column}\\
  \mathbf{O}^{(t+\frac{1}{2})}_{:,m}  &=  \mathbf{O}^{(t)}_{:,m}  \label{eq:fix} \\
 \forall i,\; \mathbf{O}^{(t+1)}_{i,:} &= \mathcal{T}( \mathbf{O}^{(t+\frac{1}{2})}_{i,:}) \label{eq:row} 
\end{align}
where  $\{i,:\}$ index $i$th row, $\{:,j\}$ index jth column and $ \mathcal{T} = \frac{\mathbf{x}}{\sum_{i}\mathbf{x}_i}$ normalize the vectors. Intuitively, the alignment scores are initially computed from the logits $\widetilde{\mathbf{W}}$, without taking constraints into account, and then alternating optimization is used to `fit' the constraints on columns and rows.

\begin{prop}\label{prop:order-converge}
$\lim_{t\rightarrow \infty}   \mathbf{O}^{(t)} = \mathbf{O}( \widetilde{ \mathbf{W}},\tau) $ where $\mathbf{O}( \widetilde{ \mathbf{W}},\tau) $ is defined in Equation~\ref{eq:reg}.
\end{prop}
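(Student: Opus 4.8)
The plan is to recognize the iteration \eqref{eq:init}--\eqref{eq:row} as a block-coordinate (cyclic) Bregman projection scheme for the convex program \eqref{eq:reg}, and then invoke the standard convergence theory for such schemes. First I would write the Lagrangian dual of \eqref{eq:reg}. Since the objective $\langle \widetilde{\mathbf{W}},\mathbf{O}\rangle - \tau\langle \mathbf{O},\log\mathbf{O}\rangle$ is strictly concave over $\mathbf{O}\geq 0$ and the constraints are the two families of linear equalities (column sums $=1$ for $j<m$, row sums $=1$ for all $i$), strong duality holds and the unique primal optimum has the Gibbs form $\mathbf{O}_{ij}^\star = \exp\!\big((\widetilde{\mathbf{W}}_{ij} + \alpha_i + \beta_j)/\tau\big)$, where $\alpha_i,\beta_j$ are the multipliers for the row and column constraints respectively (with $\beta_m$ absent, equivalently $\beta_m=0$, because column $m$ is unconstrained). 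Taking logs, the optimum lies in the affine family $\mathbf{LogO}_{ij} = \widetilde{\mathbf{W}}_{ij}/\tau + a_i + b_j$ with $b_m=0$, and is pinned down by the two normalization conditions. This is exactly the fixed-point set of the map defined by \eqref{eq:column}--\eqref{eq:row}: a $\mathrm{LogSoftmax}$ on a column adds a constant to that column (updating $b_j$) and makes $\sum_i \exp = 1$; a $\mathrm{LogSoftmax}$ on a row adds a constant to that row (updating $a_i$) and makes $\sum_j \exp = 1$; and \eqref{eq:fix} leaves column $m$ untouched, consistent with $b_m=0$. So any fixed point of the iteration equals $\mathbf{O}(\widetilde{\mathbf{W}},\tau)$, and conversely.

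Next I would establish convergence to that fixed point. The cleanest route is to view the whole scheme as alternating $I$-projections (minimizations of KL divergence) onto two affine constraint sets in the sense of Bregman / Csisz\'ar: starting from the (sub-probability) matrix $\exp(\widetilde{\mathbf{W}}/\tau)$, step \eqref{eq:column} is the $I$-projection onto $\{$all columns $j<m$ sum to $1\}$ (performed independently per column, hence the columnwise $\mathrm{LogSoftmax}$; column $m$ is left alone as in \eqref{eq:fix}), and step \eqref{eq:row} is the $I$-projection onto $\{$all rows sum to $1\}$. Both sets are closed, convex, and their intersection (intersected with the nonnegative orthant) is nonempty — it contains the Gibbs matrix above — so by the convergence theorem for cyclic Bregman projections onto a consistent system of convex sets (equivalently, Csisz\'ar's theorem on alternating $I$-projections, which is the entropy-specialization of Bregman's method cited in the paper), the iterates $\exp(\mathbf{LogO}^{(t)})$ converge to the $I$-projection of the initial matrix onto the intersection, which is precisely the entropically regularized optimum $\mathbf{O}(\widetilde{\mathbf{W}},\tau)$. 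One small care point: the constraint region is really $\{\mathbf{O}\geq 0, \text{row/column sums}=1\}$, i.e. the Birkhoff-type polytope restricted by fixing column $m$ free; I would note that nonnegativity is automatically preserved because every iterate is an entrywise exponential, so it is enough to alternate between the two affine sets and the orthant constraint is inactive throughout.

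The main obstacle I anticipate is being careful about the asymmetric role of column $m$ (the dummy/terminal node): the column constraints only apply to $j<m$, so the two "blocks" are not a symmetric doubly-stochastic setup, and I need to make sure (i) the dual has no multiplier for column $m$, (ii) step \eqref{eq:fix} is the correct "do-nothing" $I$-projection for that block, and (iii) the feasible set is still nonempty and the intersection argument goes through — this holds because $\exp(\widetilde{\mathbf{W}}/\tau)$ has strictly positive entries, so a positive feasible point exists and the row/column scalings needed are finite. A secondary, more technical point is justifying the interchange $\lim_{t\to\infty}\exp(\mathbf{LogO}^{(t)}) = \exp(\lim_t \mathbf{LogO}^{(t)})$ and that the limit is the \emph{global} optimum of \eqref{eq:reg} rather than merely a fixed point; the first follows from continuity of $\exp$ once $\mathbf{LogO}^{(t)}$ is shown to converge (the iterates stay in a compact set since all row/column sums are $1$ after the first half-step), and the second follows because strict concavity of the entropic objective makes the KKT/fixed-point conditions sufficient for global optimality. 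If a fully self-contained argument is preferred over citing Csisz\'ar/Bregman, I would instead prove monotone decrease of $\mathrm{KL}(\mathbf{O}^\star \,\|\, \exp(\mathbf{LogO}^{(t)}))$ along the iteration (each $I$-projection step can only decrease it, by the Pythagorean identity for $I$-projections onto affine sets) and use this as a Lyapunov function together with compactness to extract the limit — but I expect citing the Bregman convergence result, as the paper already does, to be the intended and shortest path.
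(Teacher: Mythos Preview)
Your proposal is correct and takes essentially the same approach as the paper: both recognize the iteration as an instance of Bregman's cyclic projection method (in your framing, alternating $I$-projections with respect to KL divergence, which is the entropy specialization) and then invoke the Bregman convergence theorem. The paper's proof differs only cosmetically---it verifies directly by computation that each $\mathrm{LogSoftmax}$ step equals the Bregman projection for $F(\mathbf{O})=-\langle\widetilde{\mathbf{W}},\mathbf{O}\rangle+\tau\langle\mathbf{O},\log\mathbf{O}-1\rangle$, whereas you arrive at the same identification via the dual/Gibbs characterization; your treatment of the asymmetric column $m$ and the feasibility/compactness issues is in fact more careful than the paper's.
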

See Appendix~\ref{prof:order-converge} for a proof based on the proof for the Bregman method~\cite{Bregman1967TheRM}. 
In practice, we take $T=50$, and have $  \mathbf{O}_\phi(\mathbf{\epsilon},\mathbf{G},\mathbf{x})= \mathbf{O}^{(T)}$. Importantly, this algorithm is highly parallelizable and amendable to batch implementation on GPU. We compute the gradients with unrolled optimization. 



\subsubsection{Neural Parameterization} \label{sec:mask}
We introduce the neural modules used for estimating logits $ \mathbf{W} = \mathrm{F}_\phi(\mathbf{G},\mathbf{x}) $ and also the masking mechanism that both ensures acyclicity and enables the use of the  copy mechanism. We have $ \mathbf{W}=\mathbf{W}^{\mathrm{raw}} + \mathbf{W}^{\mathrm{mask}} $. First, we define the unmasked logits, $\mathbf{W}^{\mathrm{raw}}= \mathbf{A}^{\mathrm{raw}}\circ  \mathbf{S}^{\mathrm{raw}} $:
\begin{align}
    \nonumber
    \mathbf{h}^{\mathrm{g}} &= \mathrm{RelGCN}(\mathbf{G};\theta)  \in \mathbb{R}^{m \times d} \\
 \nonumber
    \mathbf{A}^{\mathrm{raw}} &= \mathrm{BiAffine}^{\mathrm{align}}( \mathbf{h}^{\mathrm{token}}, \mathbf{h}^{\mathrm{g}}\circ \mathbf{h}^{\mathrm{end}} ;\phi) \\
    \nonumber
    \mathbf{S}^{\mathrm{raw}} &= \mathrm{BiAffine}^{\mathrm{segment}}(  \mathbf{h}^{\mathrm{g}}, \mathbf{h}^{\mathrm{g}}\circ \mathbf{h}^{\mathrm{end}}  ;
    \phi) 
\end{align}
where $\mathrm{RelGCN}$ is a relational graph convolutional network~\cite{Schlichtkrull2018ModelingRD} that takes  an AMR graph $\mathbf{G}$  and produces embeddings of its nodes informed by their neighbourhood in $\mathbf{G}$.  $\mathbf{h}^{\mathrm{end}} \in \mathbb{R}^{1 \times d}$ is the trainable embedding of the terminal node, and $\mathbf{h}^{\mathrm{token}}  \in \mathbb{R}^{n \times d} $ is the BiLSTM encoding of a sentence from Section~\ref{sec:seg_concept_generation}.

%

The masking also consists of two parts, the alignment mask and the segmentation mask,  $\mathbf{W}^{\mathrm{mask}}=\mathbf{A}^{\mathrm{mask}}\circ  \mathbf{S}^{\mathrm{mask}}  $.
If a node is copy-able from at least one token, the alignment mask prohibits alignments from other tokens by setting the corresponding components $\mathbf{A}^{\mathrm{mask}}_{ij}$ to $-\infty$. 

Acyclicity is ensured by setting $\mathbf{S}^{\mathrm{mask}}$ so that generation order with circles will get negative infinity in Equation~\ref{eq:reg}.
While there may be more general
ways to encode acyclicity~\cite{martins2009concise},
we simply perform a depth-first search (DFS) from the root node\footnote{We use
lexicographic ordering of edge labels in DFS.}
and permit an edge from node $i$ and $j$ only if $i$ precedes $j$ (not necessarily immediately) in the traversal. In other words,  
$\mathbf{S}^{\mathrm{mask}}_{ij}$ is set to $-\infty$ for edges $(i,j)$ violating this constraint.
The rest of components in $\mathbf{S}^{\mathrm{mask}}$ are set to 0. Note that this masking approach does not require changes in the optimization method.

\section{Parsing}
\label{sec:parsing}

While we relied on the latent variable machinery to train the parser, we do not use it at test time. In fact, the encoder $Q_\phi(\mathbf{O}|\mathbf{G},\mathbf{x})$ is discarded after training. At test time, the first step is to predict sets of concept nodes for every token using the concept identification model $P_\theta(\mathbf{v}|\mathbf{x},\mathbf{O})$ (as shown in Figure~\ref{fig:seg_lstm}). Note that the token-specific autoregressive models can be run in parallel across tokens. The second step is predicting relations between all the nodes, relying on the relation identification model $P_\theta(\mathbf{E}|\mathbf{x},\mathbf{O},\mathbf{v}) $. 

\section{Experiments} 

We experiment on LDC2016E25 (AMR2.0) and 
LDC2020T02 (AMR3.0).  \nocite{Cai2013SmatchAE}
%
The evaluation is  based on Smatch~\cite{Cai2013SmatchAE}, and the evaluation tool of~\newcite{Damonte2017AnIP}.  
We compare
our generation-order induction framework to pre-set segmentations, i.e., producing the segmentation on a preprocessing step.  We vary the segmentation methods while keeping the rest of the model identical to our full model (i.e., the same autoregressive model and the learned alignment). We provide ablation studies for our induction framework. We further provide visualization of the induced generation order, along with extra details, in Appendix. 

\vspace{1ex}
\noindent{\bf Rule-based Segmentation}  \label{sec:fixed-seg} \quad
We introduce a hand-crafted rule-based segmentation method, which relies on rules designed to handle specific AMR constructions. In particular, we use the hand-crafted segmentation system of  \newcite{lyu-titov-2018-amr}, or, more specifically, its re-implementation by \newcite{Zhang2019AMRPA}.  
Arguably, this can be thought of as an upper bound for how well an induction method can do.
This fixed segmentation can be  incorporated into our  latent-generation-order framework, so that the alignment  between concept nodes and the tokens will still be induced. This is achieved by fixing $\mathbf{S}$, while still inducing $\mathbf{A}$.

\vspace{1ex}
\noindent{\bf Greedy Segmentation}  \quad
We provide a greedy strategy for segmentation that serves as a deterministic baseline. Many nodes are aligned to tokens with the copy mechanism. We could force the unaligned nodes to join its neighbors. This is very similar to the forced alignment of unaligned nodes used in the transition parser of \newcite{Naseem2019RewardingST}. Again, the segmentation can be  incorporated into our  latent-generation-order framework by enforcing $\mathbf{S}$ and inducing $\mathbf{A}$.
See Appendix~\ref{append:greedy} for extra details about the strategy.

\vspace{1ex}
 \noindent{\bf Results}  \quad
In Table~\ref{table:sota}, we compare our models with recent AMR parsers~\cite{xu-etal-2020-improving,Cai2020AMRPV,cai-lam-2019-core,Zhang2019AMRPA,Naseem2019RewardingST,Lindemann2020FastSP,Lee2020PushingTL}, as well as \cite{lyu-titov-2018-amr}, which we build on, and \cite{Noord2017NeuralSP}, the earliest model which does not exploit any rules.  Overall, our model (`full') performs competitively, but lags behind scores reported by some of the very recent parsers.\footnote{Results from \citeauthor{Lee2020PushingTL} replace Roberta-large with Roberta-base in \citeauthor{Astudillo2020TransitionbasedPW}. With semi-supervised learning, \newcite{Lee2020PushingTL} achieved 81.3 Smatch score. } 
 However, except for a no-rule version of \newcite{Cai2020AMRPV}, all these models either use rules~\cite{Lee2020PushingTL} (see Section~\ref{sec:related}) or specialized pretraining~\cite{xu-etal-2020-improving}.

Both our VAE model and the rule-based segmentation achieve high concept identification scores~\cite{Damonte2017AnIP}.
The relation identification component is however weaker than, e.g., \cite{Cai2020AMRPV}. This may not be surprising, as  we, following ~\newcite{lyu-titov-2018-amr}, score edges independently, whereas \cite{Cai2020AMRPV} perform iterative refinement which is known to boost performance on relations~\cite{lyu-etal-2019-semantic}.  Also, we use BiLSTM encoders, which -- while cheaper to train and easier to tune --
is likely weaker than Transformer encoders used by \citeauthor{Astudillo2020TransitionbasedPW,Lee2020PushingTL} 
While these modifications, along with using extra pre-training techniques and data augmentation, may further boost performance of our model, we believe that our model is strong enough for our purposes, i.e. demonstrating that informative segmentation can be induced without relying on any rules.

\begin{table}[t!] 
   \begin{center} \setlength\tabcolsep{3pt} 
          \begin{tabular}{llccc} 
   \hline   
          & R &   Concept & SRL &  Smatch  \\\hline
    vNoord17$^\diamondsuit$  & - &  &  & 71.0 \\      
    Lyu18 &  +   & 85.9  &69.8&74.4\\
  Zhang19  & +    & 86  &71& 77.0\\
   Naseem19  & +    & 86 &72&75.5\\
   Cai19 & - & &  &   73.2 \\
   Lindemann20  & +  &   & & 76.8\\
   Lee20  & +    & 88.1  &78.2 & \bf 80.2\\
 Cai20: \\ 
 \quad w/ rules & +   & 88.1 & 74.2& \bf80.2\\
 \quad w/o rules & -   & 88.1 & 74.5&  78.7\\
 Xu20$^\diamondsuit$  &  - & 87.4 & 78.9 & \bf 80.2  \\
  
     \hline
      greedy  & -   & 87.5   \small $ \pm$ 0.1& 71.3 \small $ \pm$0.1 & 75.2 \small $ \pm$ 0.1\\
     rule    & + & \bf 88.7 \small $ \pm$ 0.2  &  73.6\small $ \pm$ 0.2  & 76.8 \small $ \pm$ 0.4\\
  full     &  - &  88.3 \small $ \pm$ 0.3 & 73.0 \small $ \pm$ 0.2& 76.1 \small $ \pm$ 0.2\\ \hline
        \end{tabular}
    \end{center}
    \vspace{-2ex}
	\caption{\label{table:sota} Scores with standard deviation on the AMR 2.0 test set.  digits.  The columns 'R' indicate if hand-crafted rules are used for segmentation, $\diamondsuit$ indicates that the system used specialized pretraining or self-training.  Our results are averaged over 4 runs. 
    }
\end{table}

\begin{table}[t!] 
   \begin{center} \setlength\tabcolsep{3pt} 
     \begin{tabular}{lccc} 
   \hline 
        Metric  &   Concept & SRL &  Smatch \\\hline
         greedy    &87.0& 71.5& 74.8 \\
        rule    & \bf 88.0 & 72.6 &\bf 75.8 \\
     full     &  87.8  &\bf 72.9 & 75.6 \\ \hline
        \end{tabular}
    \end{center}
    \vspace{-2ex}
	\caption{\label{table:sota3}  AMR 3.0 test set, averaged over 2 runs.
    }
\end{table}

Indeed, our approach beats the greedy baseline and approaches the rule-based system. The performance gap between the rule-based system and VAE is smaller on AMR 3.0 (0.2 Smatch), possibly because the rules were developed for AMR 2.0. 

 \noindent{\bf Alignment Analysis }  \quad
We analyzed the alignment induced by our full model and the model which uses rule-based segmentation. The alignments were evaluated at the level of individual concepts: if a subgraph was aligned to a token, all its concepts were considered aligned to that token. The evaluation was done on 40 sentences. The alignment error rates were 12\%, 15\% and 14\% for the full model, greedy methods and the rule-based method, respectively. This suggests that our method is able to induce relatively accurate alignments, and joint induction of alignments with segmentation may be beneficial, or, at the very least, not detrimental to alignment quality. 

 \noindent{\bf Ablations}   \quad
\begin{table}[t!] 
    \begin{center} 
        \begin{tabular}{lccc} 
            \hline     &   Concept & SRL &  Smatch \\\hline
          nothing learned    & 81.7  &62.6& 61.9\\
          segmentation learned    & 86.0  &69.1& 70.5\\
          alignment learned     & 87.6  &71.1& 74.4\\
          full  (all learned)     & \bf 88.3 &\bf 73.0&\bf 76.1\\ \hline
        \end{tabular}
    \end{center}
    \vspace{-2ex}
	\caption{\label{table:ablation_vae} Scores with different versions of latent segmentation on the AMR 2.0 test set, averaged over 2 runs
    }
    \vspace{-2ex}
\end{table}
To reconfirm that it is important to learn the segmentation and alignment, rather than to sample it randomly, we perform further ablations. In our parameterization, discussed in Section~\ref{sec:mask},  it is possible to set $\mathbf{A}^{\mathrm{raw}}=0 $ and/or $  \mathbf{S}^{\mathrm{raw}}=0$, which corresponds to sampling from the prior in training  (i.e. quasi-uniformly while respecting the constraints defined by masking) rather than learning them. We consider 4 potential options, from sampling everything uniformly to learning everything (as in our method). 
The results are summarized in Table~\ref{table:ablation_vae}. As expected, the full model performs the best, demonstrating that it is important to learn both alignments and segmentation. Interestingly, both  `segmentation learned' and `alignment learned' obtain reasonable performance, but the `nothing learned' model fails badly. 

\section{Related Work}\label{sec:related}
A wide range of approaches for AMR parsing have been explored, including graph-based models~\cite{Flanigan2014ADG,Werling2015RobustSG,lyu-titov-2018-amr,Zhang2019AMRPA}, transition-based models~\cite{Damonte2017AnIP,Ballesteros2017AMRPU}, grammar-based models~\cite{peng-etal-2015-synchronous,Artzi2015BroadcoverageCS,Groschwitz2018AMRDP,Lindemann2020FastSP} and neural autoregressive models~\cite{Konstas2017NeuralAS,Noord2017NeuralSP,Zhang2019BroadCoverageSP,Cai2020AMRPV,Xu2020ImprovingAP}. 


The majority of strong parsers rely on explicit graph segmentation in training. 
Typically, the segmentation is dealt with hand-crafted rules, with rule templates
developed by studying training set statistics and ensuring the necessary level of coverage. 
Alternatively, ~\newcite{Artzi2015BroadcoverageCS,groschwitz-etal-2017-constrained,Groschwitz2018AMRDP,Lindemann2020FastSP,peng-etal-2015-synchronous} using existing grammar formalisms to segment the AMR graphs. 
 \newcite{Astudillo2020TransitionbasedPW,Lee2020PushingTL} - while not not relying on graph recategorization rules -  use a rule system to `pack' and `unpack' nodes. In recent work,  strong results were obtained without using any explicit segmentation and alignment, relying on sequence-sequence models~\cite{Xu2020ImprovingAP,Cai2020AMRPV}, still the rules appear useful even with these strong models~\cite{Cai2020AMRPV}.  


More generally, outside of AMR parsing, differentiable relaxations of latent structure representations have received attention in NLP~\cite{kim2017structured,liu2018learning}, including previous applications of the perturb-and-MAP framework~\cite{Corro2019DifferentiablePS}. From a more general goal perspective -- inducing a segmentation of a linguistic structure -- our work is related to tree-substitution grammar induction~\cite{sima1995cient,cohn2010inducing}, the DOP paradigm~\cite{bod2003data} and unsupervised semantic parsing~\cite{poon2009unsupervised,titov2011bayesian}, though the methods used in that previous work are very different from ours.
 

\section{Conclusions} 
To eliminate hand-crafted segmentation systems used in previous AMR parsers, we cast the alignment and segmentation as generation-order induction. We propose to treat this generation order as a latent variable in a VAE framework. Our method outperforms a simple segmentation heuristic and approaches the performance of a method using rules designed to handle specific AMR constructions.   
Importantly, while the latent variable modeling machinery is used in training, the parser is very simple at test time. It tags the input words with AMR concept nodes with autoregressive models and then predicts relations between the nodes independently from each other.  

Vanilla sequence-to-sequence models are known to struggle with out-of-distribution generalization~\cite{lake18a,bahdanau2018systematic}, and, in the future work, it would be interesting to see if this   holds for AMR and if such more constrained and structured methods as ours can better deal with this more challenging but realistic setting.

\section*{Acknowledgments}
We thank the reviewers for their useful feedback and comments.
The project was supported by the European Research Council (ERC StG BroadSem 678254), the Dutch National Science Foundation (NWO VIDI 639.022.518) and Bloomberg L.P.

\bibliography{eacl2021}
\bibliographystyle{acl_natbib}

\newpage

$\,$

\newpage

\appendix
\newpage\section{Decoding}~\label{append:seg_decode}
We need to model the identification of the root node of the AMR graph. We specify the root identification as:
\begin{align}
    P_\theta(i|\mathbf{x},\mathbf{O},\mathbf{v}) &=  \frac{\exp (\langle \mathbf{h}^\mathrm{root} , \mathbf{h}^e_i\rangle) }{\sum_{j =0 }^{m-1} \exp (\langle \mathbf{h}^\mathrm{root} ,\mathbf{h}^e_j\rangle)} 
\end{align}
where $\mathbf{h}^\mathrm{root}$ is a trainable vector. Inspired by~\newcite{Zhang2019AMRPA}, who rely on AMR graphs being closely related to dependency trees, we first decode the AMR graph as a maximum spanning tree with log probability of most likely arc-label as edge weights. The reentrancy edges are added afterwards, if their probability is larger than $0.5$. We add at most 5 reentrancy edges, based on the empirical founding of~\newcite{Damonte2020rent}. 

\section{Concept Identification Detail}~\label{append:concept}
Now, we specify $P_\theta(\mathbf{v}_i|\mathbf{h}^{\mathrm{node}}_i) $ and $P_\theta(\mathbf{v}_m|\mathbf{h}^{\mathrm{tail}}_i)$ with a copy mechanism. Formally, we have a small set of candidate nodes $\mathcal{V}(\mathbf{x}_i)$  for each token $\mathbf{x}_i$, and a shared set of candidate nodes $\mathcal{V}^{\mathrm{share}}$, which contain $v_{copy}$. This, however, depends on the token, yet we are learning a latent alignment. During training, we consider all the union of candidate nodes from all possible tokens$ \mathcal{V}(\mathbf{v}_i) = \cup_{j: \mathbf{v}_i \in \mathcal{V}(\mathbf{x}_j)}    \mathcal{V}(\mathbf{x}_j)  $. We abuse notation slightly, and denote the embedding of node $i$ by $\mathbf{v}_i$. At training time, for node $\mathbf{v}_i$, we have \begin{align}
\mathbf{h}^c_i &= \mathrm{NN}^{\mathrm{node}}(\mathbf{h}^{\mathrm{node}}_i;\theta) \\
P_\theta(\mathbf{v}_i|\mathbf{h}^{\mathrm{node}}_i) 
=& \frac{ [[\mathbf{v}_i \in \mathcal{V}^{\mathrm{share}} ]]  \exp (\langle \mathbf{v}_i,\mathbf{h}^c_i \rangle) }{\sum_{v \in \mathbb{\mathbf{v}} } \exp (\langle v,\mathbf{h}^c_i\rangle)}  \nonumber\\
+&   \frac{ [[\mathbf{v}_i \in \mathcal{V}(\mathbf{v}_i)]]  \exp (\langle v_{copy},\mathbf{h}^c_i \rangle) }{\sum_{v \in \mathbb{\mathbf{v}} } \exp (\langle v,\mathbf{h}^c_i\rangle)} \nonumber \\
\times &   \frac{  \exp (\mathrm{S}(\mathbf{v}_i,\mathbf{h}^c_i )) }{\sum_{v \in  \mathcal{V}(\mathbf{v}_i)  } \exp (\mathrm{S}(v,\mathbf{h}^c_i )} 
\end{align}
where $\mathrm{NN}$ is a standard one-layer feedforward neural network, and $[[\ldots]]$ denotes the indicator function. $\mathrm{S}(v,\mathbf{h}^c_i )$ assigns a score to candidate nodes given the hidden state. To use pre-trained word embedding~\cite{pennington2014glove}, the representation of $v$ is decomposed into primitive category embedding $(\mathcal{C}(v)$\footnote{AMR nodes have primitive category, including string, number, frame, concept and special nodes (e.g. polarity).} and surface lemma embedding. The score function is then a biaffine scoring based on the embeddings and hidden states$(\mathcal{L}(v)$ $\mathrm{S}(v,\mathbf{h}^c_i )=\mathrm{Biaffine}(\mathcal{C}(v)\circ \mathcal{L}(v),\mathbf{h}^c_i;\theta)$. For the terminal nodes, we have:
\begin{align}
\mathbf{h}^t_i &= \mathrm{NN}^{\mathrm{node}}(\mathbf{h}^{\mathrm{tail}}_i;\theta)  \\
   P_\theta(\mathbf{v}_m|\mathbf{h}^{\mathrm{tail}}_i) &= \frac{ \exp (\langle \mathbf{v}_m,\mathbf{h}^t_i \rangle) }{\sum_{v \in \mathbb{\mathbf{v}} } \exp (\langle v,\mathbf{h}^t_i\rangle)} 
\end{align}

At testing time, we perform greedy decoding to generate nodes from each token in parallel until either terminal node or $T$ nodes are generated.
\section{Computing $\mathbf{B}$ and $\mathbf{A}^{\infty}$} \label{appendix:align_derivative} 
We obtain $\mathbf{B}$ by having:
\begin{align}
   \mathbf{B} = \mathbf{A} [\mathbf{S}_{:,:m}+\mathrm{Diag}(\mathbf{S}_{:,m})]^T;
\end{align}
where $\mathbf{S}_{:,:m}$ takes the submatrix of  $\mathbf{S}$, excluding the last column, and $\mathrm{Diag}(\mathbf{S}_{:,m}$ is the diagonal matrix whose diagonal entries are the last column of $\mathbf{S}$. Intuitively, $[\mathbf{S}_{:,:m}+\mathrm{Diag}(\mathbf{S}_{:,m})]$ can be thought as a Markov transition matrix that passes down the alignment along the generation order, but keeps the alignment mass if the node will generate $\emptyset$. We truncate the transition at $T=4$, as we do not expect a subgraph containing more than 4 nodes.

To obtain $\mathbf{A}^{\infty}$, we observe $\mathbf{A}^{\infty}$ should obey the following self-consistency equation:
\begin{align}
   \mathbf{A}^{{\infty}} &=  \mathbf{A}^{{\infty}} \mathbf{S}_{:,:m} +  \mathbf{A} \label{eq:edge_align}
\end{align}
This means, node $j$ is generated from token $k$ iff node $i$ is is generated from token $k$ and node $i$ generates node $j$ or node  $j$ is directly generated from token $k$. This $\mathbf{A}^{\infty}$ can be computed  by initializing $ \mathbf{A}^{{\infty}}=\mathbf{A}$, and repeating Equation~\ref{eq:edge_align} as assignment for $T=4$ times. Intuitively, the $ \mathbf{A}^{{\infty}}$ alignment is passed down along the generation order, while keeping getting alignment mass from the first node alignment. As a result, all nodes get assigned an alignment. As an alternative motivation, the above algorithmic assignment works as a truncated power series expansion of self-consistency equation solution $ \mathbf{A}^{{\infty}} = [I-\mathbf{S}_{:,:m}]^{-1}\mathbf{A} $.

\section{Ablation on Stochastic Softmax}
Our full model uses the Straight-Through (ST) gradient estimator and the Free Bits trick with $\lambda=10$~\cite{Kingma2017ImprovedVI}.\footnote{The Free Bits trick is used to prevent `the posterior collapse'~\cite{Kingma2017ImprovedVI}. In other words, we use $\max (\lambda, \mathrm{KL}(\mathcal{G}(  \mathbf{W},1)||\mathcal{G}(0,1)) )$ for the KL divergence regularizer.} 
We perform an analysis of different variations of the stochastic softmax:  (1)  the soft stochastic softmax is the original one with the entropic regularizer (see Section~\ref{sec:soft});  (2) the rounded stochastic softmax, which selects the highest scored next node from each tokens and concept nodes based on the soft stochastic softmax;\footnote{Such rounding does not provide any guarantee of being a valid generation order, but serves as a baseline. In general, a threshold function (at 0.5) can be applied if the constraints have no structure.} (3) our full model with the ST estimator.  All those models use Free Bits ($\lambda=10$),  while for `no free bits' $\lambda=0$. 
\begin{table}[t!] 
    \begin{center} 
        \begin{tabular}{llll} 
            \hline  Metric   &   Concept & SRL &  Smatch \\\hline
   no free bits  &  83.5  & 66.3 &  66.1 \\ 
           soft    & 84.9 & 68.1& 70.3\\
          rounding     & 87.7  &71.8& 74.5\\
          straight-through     &\bf  88.3 &\bf 73.0&\bf 76.1\\ \hline
        \end{tabular}
    \end{center}
    \vspace{-2ex}
	\caption{\label{table:ablation_ss} Scores with different versions of latent segmentation on the AMR 2.0 test set. Scores are averaged over 2 runs
    }
\end{table}
As we can see in Table~\ref{table:ablation_ss},  there is a substantial gap between using structured ST and the two other versions. This illustrates the need for exposing the parsing model to discrete structures in training. Also, the Free Bits trick appears crucial as it prevents the (partial) posterior collapse in our model. We inspected the logits after training and observed that, without free-bits, the learned $\mathbf{W}$ are very small, in the $[-0.01,+0.01]$ range.   

\section{Greedy Segmentation} \label{append:greedy}
We present a greedy strategy for segmentation that serves as a deterministic baseline. This greedy segmentation can be used in the same way as the rule-based segmentation by setting $\mathbf{S}^\mathrm{mask}$. 

Many nodes are aligned to tokens with the copy mechanism. We could force the unaligned nodes to join their neighbors. This is very similar to the forced alignment of unaligned nodes used in the transition parser of \newcite{Naseem2019RewardingST}. 
We traversal the AMR graph the same way as we do when we produce the masking (Section~\ref{sec:mask}).  During the traversal, we greedily combine subgraphs until one of the constraints is violated: (1) the combined subgraph will have more than 4 nodes; (2) the combined subgraph will have more than 2 copy-able nodes. We present the algorithm recursively (see Algorithm~\ref{alg:heu}).
\begin{algorithm}[ht!] 
\SetAlgoLined
\KwIn{ graph $\mathbf{G}$,  node index $i$}
\KwResult{ segmentation $\mathbf{S}$, $n$,  $z$, $k$}
\vspace{1ex}
  $\mathbf{S} = \mathbf{0} $, $k=i$, $n=1$, $z=\mathbf{z}_i$\;
  \ForAll{$j \in \mathrm{Child}[i]$}{
    \If {$j$ $ \mathrm{not visited}$} {
    $\mathbf{S}', n', z', k' = \mathrm{Greedy}(\mathbf{G},j)$ \;
        $\mathbf{S}=\mathbf{S} + \mathbf{S}'$\;
        \If {$n+n'\leq T \land z'+z\leq 1$ } {
        $\mathbf{S}_{kj}=1$, $n = n + n'$, $z=z+z'$ $k =k'$\; 
        }
    }
}
 \caption{Greedy Segmentation \label{alg:heu}}
\end{algorithm}
Variable $\mathbf{z}_i $ indicates whether node $i$ is copy-able and $T=4$ represent the maximum subgraph size; $n$ denotes the current subgraph size;  $z$ indicates whether the current subgraph contains a copy-able node; $k$ is the last node in the current subgraph, which is used to generate to future nodes in a subgraph. The condition $n+n'\leq T \land z'+z\leq 1$ determines whether we combine the current subgraph rooted at node $i$ and the subgraph rooted at node $j$. Running the algorithm on an AMR graph and the root index will get us the entire segmentation. This greedy method does not require any expert knowledge about AMR, so this should serve as a baseline.

\section{Visualizing Generation Order}\label{append:seg_visual}

\begin{figure}
    \centering  
    \includegraphics[width=\linewidth]{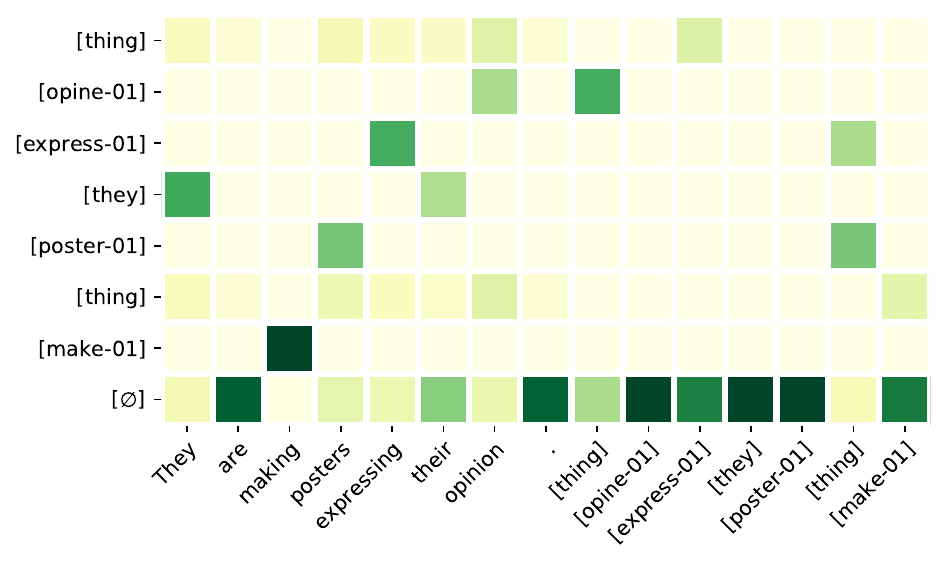} 
    \caption{Example of Soft Stochastic Softmax Latent Generation Order.} \label{fig:soft}
    \centering
    \includegraphics[width=\linewidth]{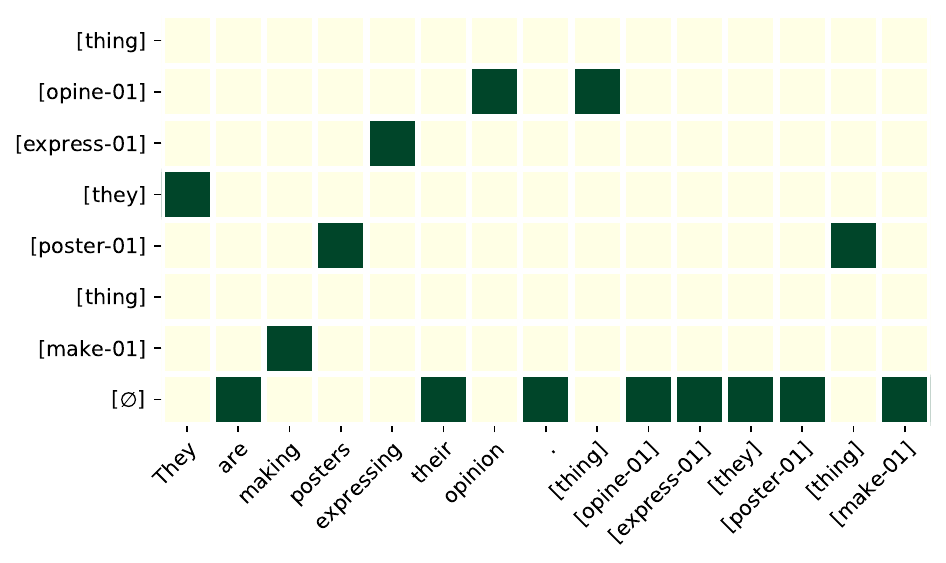} 
    \caption{Example of Rounded Stochastic Softmax Latent Generation Order.} \label{fig:binary}
    \centering
    \includegraphics[width=\linewidth]{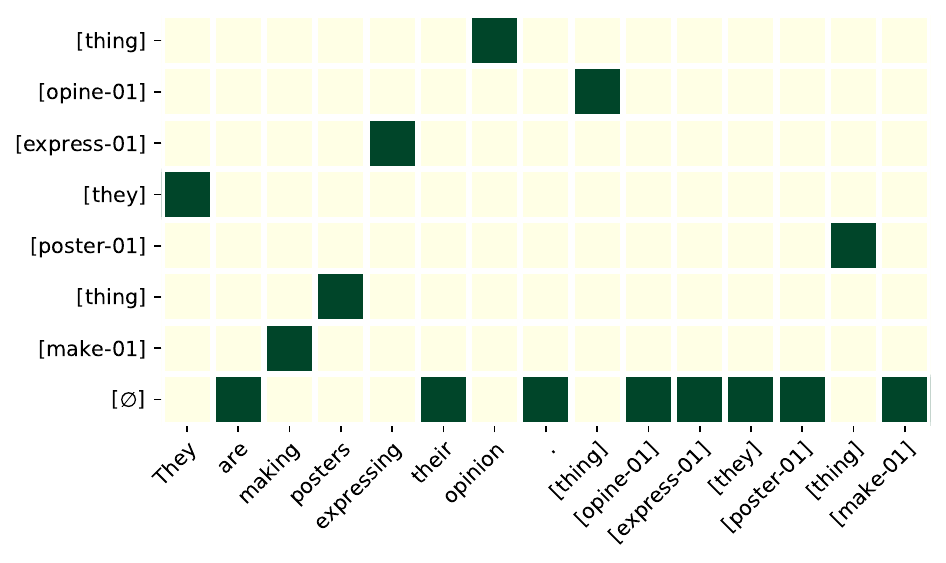} 
    \caption{Example of Hard (straight-through) Stochastic Softmax Latent Generation Order.} \label{fig:hard}
    \centering
    \includegraphics[width=\linewidth]{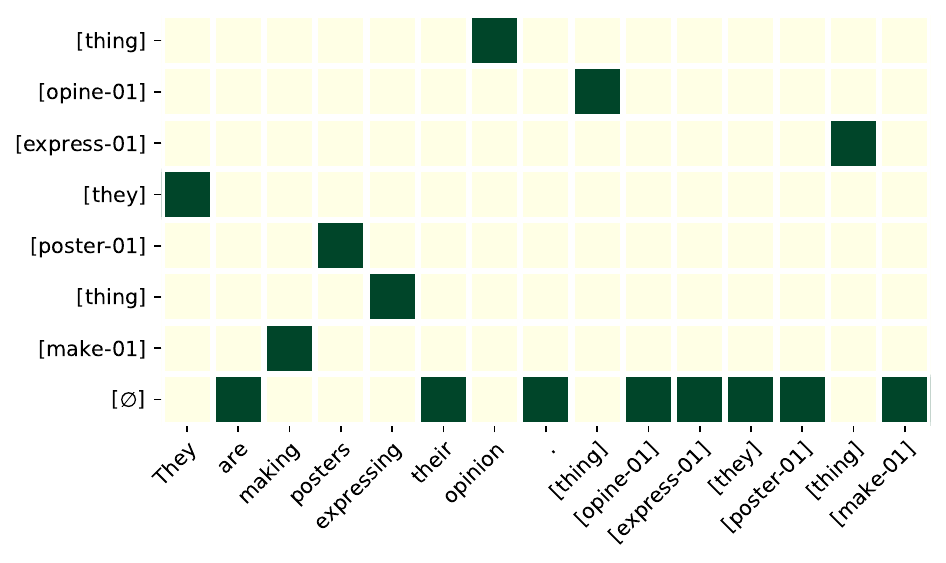}
    \caption{Example of Rule-Segmentation Stochastic Softmax Latent Generation Order.}  \label{fig:rule}
\end{figure}
In Figures~\ref{fig:soft},~\ref{fig:binary},~\ref{fig:hard},~\ref{fig:rule},\footnote{Incidentally, the greedy segementation produces the same segmentation as rule-based in this example.} we present one example of the induced learned for our three variations of stochastic softmax, and one with rule-based segmentation. The nodes are represented in []. Their gold AMR is:
\begin{lstlisting}[  basicstyle=\small]
(m / make-01
    :ARG0 (t / they)
    :ARG1 (t2 / thing
        :ARG2-of (p / poster-01)
        :ARG0-of (e / express-01
            :ARG1 (t3 / thing
                :ARG1-of (o / opine-01
                    :ARG0 t)))))
\end{lstlisting}
As we can see, the standard stochastic softmax indeed produces soft latent structure that might result in large training/testing gap. Furthermore, the rounding strategy does not satisfy the constraint that every concept node can only be generated from one token or another concept node (i.e. [poster-01] is generated twice, and [thing] is never generated.). Meanwhile, the straight through stochastic softmax produce a valid generation order. In Appendix~\ref{append:valid}, we will show the validity formally. It is worth to note that our learned generation order differs from the rule based one. When producing the rule-based segmentation, `(t2 / thing  :ARG0-of (e / express-01 )' took precedence over `(t2 / thing  :ARG2-of (p / poster-01)' due to the order over traversal edges. The learned model, however, figured out that the poster is the thing.

\section{Hyper-Parameters }
We use RoBERTa-large~\cite{Liu2019RoBERTaAR} from ~\newcite{Wolf2019HuggingFacesTS} for contextualised embeddings before LSTMs.  BiLSTM for concept identification has 1 layer, and BiLSTM for relation identification has 2 layers. Both have hidden size 1024. Their averaged representation is used for alignment. RelGCN used 128 hidden units and 1 hidden layer (plus one input layer and output layer). Relation identification used 128 hidden units. The LSTM for the locally auto-regressive model is one layer with 1024 hidden units. Adam~\cite{Kingma2015AdamAM} is used with learning rate $3e-4$ and beta=$0.9,0.99$. Early stopping is used with maximum 60 epochs of training. Dropout is set at 0.33. Those hyper-parameters are selected manually, we basically followed the standard model size as in ~\cite{lyu-titov-2018-amr,Zhang2019AMRPA}. We will release the code based on the AllenNLP framework~\cite{Gardner2018AllenNLPAD}.

\section{Pre-and-Post processing}
We follow ~\newcite{lyu-titov-2018-amr} for pre-and-post processing. We use CoreNLP~\cite{manning-EtAl:2014:P14-5} for tokenization and lemmatization. The copy-able dictionary is built with the rules based on string matching between lemmas and concept node string as in~\newcite{lyu-titov-2018-amr}. 

For post-processing, wiki tags are added after the named entity being produced in the graph via a look-up table built from the training set or provided by CoreNLP. We also collapse nodes that represent the same pronouns as heuristics for co-reference resolution.
    
\section{Proof of Proposition~\ref{prop:order-converge}} \label{prof:order-converge}
We prove Proposition~\ref{prop:order-converge} based on the Bregman method~\cite{Bregman1967TheRM}. The Bregman's  method solves convex optimization with a set of linear equalities, the setting is as follows:
\begin{align} \label{eq:breg}
\min_{x \in \Bar{\Omega}}\ &\ F(x) &\text{s.t.}\ & Ax=b,\,  
\end{align}
where $F$ is strongly convex and continuously differentiable. Note that $A$ is not our alignment, but denotes a matrix that represents constraints. Two important ingredients are Bregman's divergence $ D_F(x,y) =  F(x) - F(y) - \langle \nabla F(y) ,x-y \rangle  $, and Bregman's projection: $ P_{\omega,F}(y) = \argmin_{x\in \omega} D_F(x,y)  $, where $\omega$ represents constraint. Now, the Bregman's method works as:
\begin{algorithm}[ht]
\SetAlgoLined
 pick $ y^0 \in \{y\in \Omega |\nabla F(y) = uA,  u \in \mathbb{R}^m\}$\;
  \For{$t\gets1$ \KwTo $\infty$}{
 $ y_0^t \gets y^{t-1} $ \;
   \For{$i\gets1$ \KwTo $m$}{
     $ y_i^t \gets P_{A_ix=b_i,F}(y^t_{i-1})$ \;
    }
    
 $ y^t \gets y_m^{t} $ \;
 }
 \caption{Bregman's method for solving convex optimization over linear constraints}
\end{algorithm}
Intuitively, Bregman's method iteratively performs alternating projections w.r.t. each constraint. After each projection, the score $F$ is lowered by the construction of Bregman's projection. Such alternating projections eventually converge, and with careful initialization solve the optimization problem.  
\begin{theorem}[\protect{\citealt{Bregman1967TheRM}}]  \label{tm:bregman}
$ \lim_{t\rightarrow \infty}   y^t  $ solves the optimization problem~\ref{eq:breg}.
\end{theorem}
\begin{proof}[Proof of Proposition \ref{prop:order-converge}]
We show Proposition \ref{prop:order-converge} by showing the Algorithm defined by equations ~\ref{eq:init}, ~\ref{eq:column}, ~\ref{eq:fix} and  ~\ref{eq:row} implements Bregman's method. Then, Proposition \ref{prop:order-converge}  follows from Theorem \ref{tm:bregman}.

Now, we build Bregman's method for our optimization problem~\ref{eq:reg}. For simplicity, we focus on the linear algebraic structure, but do not strictly follow the standard matrix notation. We have $\mathbf{O}$ as variable, and $F(\mathbf{O}) = - \langle \widetilde{ \mathbf{W}}, \mathbf{O} \rangle + \tau \langle \mathbf{O}, \log \mathbf{O} -1 \rangle$\footnote{This regularizer differs from the original one in ~\ref{eq:reg} by a constant $m+n$, due to the constraints. So, the optimization problem is equivalent.}. For initialization, we have $\nabla F( \mathbf{O}) =  -  \widetilde{ \mathbf{W}} +\tau  \log \mathbf{O}$. Take $u$ = 0, we have $\mathbf{O}^{(0)}= \exp (\frac{ \widetilde{ \mathbf{W}}}{\tau}) \iff \log \mathbf{O}^{(0)} = \frac{ \widetilde{ \mathbf{W}}}{\tau} $. This corresponds to the initialization step as in our Equation~\ref{eq:init}. Then, we iterate through constraints to perform Bregman's projection. First, the column normalization constraints $ \forall j < m, \sum_{i=0}^{n+m-1} \mathbf{O}_{ij} =1$. Take a $j<m$, we need to compute $P_{\sum_{i=0}^{n+m-1} \mathbf{O}_{ij} =1,F}(\mathbf{O}^(t))$. A very important property is that our $F( \mathbf{O})  =\sum_{ij} f_{ij}(\mathbf{O}_{ij})$, where $f_{ij}(\mathbf{O}_{ij}) = - \widetilde{ \mathbf{W}}_{ij}\mathbf{O}_{ij} +\tau \mathbf{O}_{ij} (\log \mathbf{O}_{ij} -1) $. Moreover, $D_F(x,y) = 0 \iff x=y$. Therefore, for variables that are not involved in the constraints, they are kept the same. To simplify notation, we extend the domain of $F$ to parts of the variable. e.g., $F(\mathbf{O}_{:,j} )=\sum_{i} f_{ij}(\mathbf{O}_{ij})$. Now, let us focus on column $j$, we have:
\begin{align}
   & \argmin_{x: \sum_i x_i =1 }  F(x) - F(\mathbf{O}_{:,j}) - \langle \nabla F(\mathbf{O}_{:,j} ) ,x-\mathbf{O}_{:,j} \rangle  \\
    = & \argmin_{x: \sum_i x_i =1 }   - \langle \widetilde{ \mathbf{W}}_{:,j}, x \rangle + \tau \langle x, \log x -1 \rangle   \nonumber \\
    &- \langle \nabla F(\mathbf{O}_{:,j} ) ,x \rangle \\
    = & \argmin_{x: \sum_i x_i =1 }   - \langle \widetilde{ \mathbf{W}}_{:,j}, x \rangle + \tau \langle x, \log x -1 \rangle   \nonumber\\
    &- \langle-  \widetilde{ \mathbf{W}}_{:,j} +\tau  \log \mathbf{O}_{:,j} ,x \rangle  \\
    = & \argmin_{x: \sum_i x_i =1 } \tau \langle x, \log x -1 \rangle   + \langle \tau  \log \mathbf{O}_{:,j} ,x \rangle   \\
    = & \argmin_{x: \sum_i x_i =1 } \langle x, \log x -1 \rangle   + \langle  \log \mathbf{O}_{:,j} ,x \rangle   \\
  =& \mathrm{Softmax}(\log \mathbf{O}_{:,j} )
\end{align}
since when iterating over these mutually non-overlapping constraints, the non-focused variables are always kept the same. It is hence equivalent to computing them in parallel, which is expressed in our column normalization step ~\ref{eq:column}. Similarly, we can derive row normalization step ~\ref{eq:row}. Therefore, our algorithm is an implementation of Bregman's method, and Proposition \ref{prop:order-converge} follows from Theorem \ref{tm:bregman}.
\end{proof}

\section{Generation Order is Discrete by LP} \label{append:valid}
If $\mathbf{O}( \widetilde{ \mathbf{W}},0) $ is integral valued, it  belongs to $\mathcal{O}$ by definition. In most cases, there is no guarantee that the linear programming in the relaxed space yields a solution that is also an integer.  However, in our cases, we have the following result:
\begin{prop}\label{prop:uni}
With probability 1, a unique $\mathbf{O}( \widetilde{ \mathbf{W}},0) \in \{0,1\}^{(n+m) \times (m +1)} $, where $\mathbf{O}( \widetilde{ \mathbf{W}},0)  $ is defined in Equation~\ref{eq:reg}.
\end{prop}
Intuitively, this is a generalization of a classical result about perfect matching on bipartite graph~\cite{Integer-Programming}. To prove this, we need the following theorems from integer linear programming.
\begin{theorem}[{\citealt[page 130,133]{Integer-Programming}}] l\label{thm:uni}
Let $A$ be an $q \times p$ integral matrix. For all integral vectors $ d, l, u  $ and  $c \in \mathbb{R}^p$,  $ \max\{\langle c,x\rangle  :  Ax = d, l \leq x \leq u\}$ is attained by an integral vector x  if and only if $ A$ is totally unimodular.\footnote{$A$ is totally unimodular if every square submatrix has determinant
$0,\pm 1$. We combined a few theorems and definitions from ~\newcite{Integer-Programming} into this theorem.}
\end{theorem}
Note that this theorem does not claim at all the solution is integer, nor that it is unique. However, one should understand this limitation as some degenerate case of $c$. However, a total unimodular matrix does characterize the convex hull of its integral points. To prove this, we need an additional lemma. 
\begin{lemma}[{\citealt[page 21]{Integer-Programming}}] l\label{lemme:convex_hull}
Let $S \in \mathbb{R}^n$ and $c \in \mathbb{R}^n$. Then $\sup \{\langle c,s \rangle  :s\in S \} = \sup \{\langle c,s \rangle  :s\in \mathrm{Conv}(S) \}$. Furthermore, the supremum of $\langle c,s \rangle $ is attained over $S$ if and only if it is
attained over $\mathrm{Conv}(S)$.
\end{lemma}
where $\mathrm{Conv}(S)$ is the convex hull of $S$. Now we have the following proposition:
\begin{prop}\label{prop:uni_convex_hull}
Let $A$ be an $q \times p$ integral matrix. For all integral vectors $ d, l, u  $ ,and  $c \in \mathbb{R}^p$ such that $\{ x \in \{0,1\}^p |  Ax = d, l \leq x \leq u\} $ is a finite set,  $\{ x \in \mathbb{p} |  Ax = d, l \leq x \leq u\} = \mathrm{Conv}(\{ x \in \{0,1\}^p |  Ax = d, l \leq x \leq u\})$  if and only if $ A$ is totally unimodular.
\end{prop}
In other words, we know the LP relaxation is the convex hull.
\begin{proof}
By Theorem~\ref{thm:uni}, $ A$ is totally unimodular is equivalent to maximum is attained by an integer solution. Clearly, the LP relaxation contains the convex hull. So, we only need to show that the LP relaxation does not contain any more points. Now suppose the LP relaxation contains another point $x'$ that's not in the convex hull. Since, we restrict our discussion on finite set of integer, both the $\{x'\}$ and the convex hull is closed set. Then by the separation theorem, we have a vector $c$ s.t. $\langle c,x' \rangle >  \langle c,x \rangle \forall x \in \mathrm{Conv}(\{ x \in \{0,1\}^p |  Ax = d, l \leq x \leq u\}) $,  which contradicts Lemma~\ref{lemme:convex_hull}.
\end{proof}

\begin{theorem}[\protect{\citealt[page 133,134]{Integer-Programming}}]\label{tm:bi}
A $0,\pm1$ matrix $A$ with at most two nonzero elements in
each column is totally unimodular if and only if rows of $A$ can be partitioned into two sets, red and blue, such that the sum of the red rows minus the sum of the blue rows is a vector whose entries are $0,\pm1$ (admits row-bicoloring).
\end{theorem}
Our $ \mathbf{O} $ should be the column vector $x$, and constraints should be represented by a matrix $A$. In particular, we view $  \mathbf{O}  $ as a column vector, but still access the item by $\mathbf{O}_{ij}$.\footnote{Alternatively, one could have a  vector $x $ and $x_{ i(m+1)+j} = \mathbf{O}_{ij} $. However, this will gets clumsy.} The matrix $A \in \{0,\pm 1\}^{(m+ (m+n)) \times ((n+m) (m+1))}$. $A_{:,ij}$ denotes the constraints involving $\mathbf{O}_{ij}$ . The first $m$ rows of $A$ correspond to $  \forall  j < m, \sum_{i=0}^{n+m-1} \mathbf{O}_{ij} =1 $, and the remaining $m+n$ rows correspond to $\forall i, \sum_{j=0}^{m} \mathbf{O}_{ij} =1 $. Therefore, we have $\forall k <m , j < m , i, A_{k,ij}= \delta_{j,k}$ and $\forall k\geq m , j  , i A_{k,ij}=  \delta_{i,k-m}$, else $A_{k,ij}= 0$, where $ \delta_{j,k} = [[j==k]]$. We have the linear constraints in standard form as $A\mathbf{O} = \mathbf{1}$.
\begin{lemma}
The $A$ defined above is totally unimodular.
\end{lemma}
\begin{proof}
First, we show $A$ admits row-bicoloring. We color the first $m$ rows red, and remaining $n+m$ rows blue. The sum of red rows is: $R_{ij}=\sum_{k=0 }^{m-1} A_{k,ij} =\sum_{k=0 }^{m-1}\delta_{j,k}= [[j<m]] $ and the sum of blues is $B_{ij}=\sum_{k=m }^{2m+n-1} A_{k,ij}= \sum_{k=m }^{2m+n-1} \delta_{i,k-m} = 1$. Therefore,  $R_{ij} - B_{ij} = [[j==m]] \in \{0,\pm1\}$, and $A$ admits a row-bicoloring. Since $A$ has only $0,\pm 1$ value, and one variable in $\mathbf{O}$ at most participates in two constraints (incoming and outgoing), by Theorem~\ref{tm:bi}, $A$ is totally unimodular. 
\end{proof}
Now, we prove Proposition~\ref{prop:uni}.
\begin{proof} \label{prof:uni}
We have $A$ being totally unimodular. We have  $c=\widetilde{W}$ , $l=0,u=1$, by Theorem~\ref{thm:uni}, the LP solutions contain an integer vector. Since the Gumbel distribution has a positive and differentiable density,  by \cite[Proposition~3]{Paulus2020GradientEW},  $ \argmax_{  \mathbf{O} \in \mathcal{O}} \langle \widetilde{ \mathbf{W}}, \mathbf{O} \rangle $  yields a unique solution with probability $1$. Clearly, this solution is the only integer solution in our LP solutions. Now, suppose another non-integer solution exists. We know the linear programming domain is the convex hull by Proposition~\ref{prop:uni_convex_hull}. Clearly, another integer solution exists, which contradicts the uniqueness of the integer solution. Hence, the  $\mathbf{O}( \widetilde{ \mathbf{W}},0)$ yields a unique integer solution with probability $1$.
\end{proof}

\end{document}


\maketitle

\appendix
\newpage\section{Decoding}~\label{append:seg_decode}
AMR graph is a rooted directed acyclic graph, and traversal on a connected graph from the root gives the directed acyclic graph. Therefore, we need another root identifier that chooses the root and a decoding algorithm to obtain a connected graph. We specify the root identification as:
\begin{align}
    P_\theta(i|\mathbf{x},\mathbf{O},\mathbf{v}) &=  \frac{\exp (\langle \mathbf{h}^\mathrm{root} , \mathbf{h}^e_i\rangle) }{\sum_{j =0 }^{m-1} \exp (\langle \mathbf{h}^\mathrm{root} ,\mathbf{h}^e_j\rangle)} 
\end{align}
where $\mathbf{h}^\mathrm{root}$ is a trainable vector. Inspired by~\newcite{Zhang2019AMRPA}, who utilizes the fact that AMR graph is very closely related to dependency tree, we first decode the AMR graph as a maximum spanning tree with log probability of most likely arc-label as edge weights. The reentrancy edges are added afterwards, if their probability is larger than $0.5$. We add at most 5 reentrancy edges, based on the empirical founding of~\newcite{Damonte2020rent}. 

\section{Concept Identification Detail}~\label{append:concept}
Now, we specify $P_\theta(\mathbf{v}_i|\mathbf{h}^{\mathrm{node}}_i) $ and $P_\theta(\mathbf{v}_m|\mathbf{h}^{\mathrm{tail}}_i)$ with copy mechanism. Formally, we have a small set of candidate nodes $\mathcal{V}(\mathbf{x}_i)$  for each token $\mathbf{x}_i$, and a shared set of candidate nodes $\mathcal{V}^{\mathrm{share}}$, which contains $v_{copy}$. This, however, depends on the token, yet we are learning a latent alignment. During training, we consider all the union of candidate nodes from all possible tokens$ \mathcal{V}(\mathbf{v}_i) = \cup_{j: \mathbf{v}_i \in \mathcal{V}(\mathbf{x}_j)}    \mathcal{V}(\mathbf{x}_j)  $. We abuse the notation slightly, we denote embedding of node i by $\mathbf{v}_i$. In all, at training time, for node $\mathbf{v}_i$, we have \begin{align}
\mathbf{h}^c_i &= \mathrm{NN}^{\mathrm{node}}(\mathbf{h}^{\mathrm{node}}_i;\theta) \\
P_\theta(\mathbf{v}_i|\mathbf{h}^{\mathrm{node}}_i) 
=& \frac{ [[\mathbf{v}_i \in \mathcal{V}^{\mathrm{share}} ]]  \exp (\langle \mathbf{v}_i,\mathbf{h}^c_i \rangle) }{\sum_{v \in \mathbb{\mathbf{v}} } \exp (\langle v,\mathbf{h}^c_i\rangle)}  \nonumber\\
+&   \frac{ [[\mathbf{v}_i \in \mathcal{V}(\mathbf{v}_i)]]  \exp (\langle v_{copy},\mathbf{h}^c_i \rangle) }{\sum_{v \in \mathbb{\mathbf{v}} } \exp (\langle v,\mathbf{h}^c_i\rangle)} \nonumber \\
\times &   \frac{  \exp (\mathrm{S}(\mathbf{v}_i,\mathbf{h}^c_i )) }{\sum_{v \in  \mathcal{V}(\mathbf{v}_i)  } \exp (\mathrm{S}(v,\mathbf{h}^c_i )} 
\end{align}
where $\mathrm{NN}$ is a standard one-layer feedforward neural network, and $[[\ldots]]$ denotes the indicator function. $\mathrm{S}(v,\mathbf{h}^c_i )$ assigns a score to candidate nodes given the hidden state. To utilize pre-trained word embedding~\cite{pennington2014glove}, the representation of $v$ is decomposed into primitive category embedding $(\mathcal{C}(v)$\footnote{AMR nodes have primitive category, including string, number, frame, concept and special node (e.g. polarity).} and surface lemma embedding. The score function is then a biaffine scoring based on embeddings and hidden states$(\mathcal{L}(v)$ $\mathrm{S}(v,\mathbf{h}^c_i )=\mathrm{Biaffine}(\mathcal{C}(v)\circ \mathcal{L}(v),\mathbf{h}^c_i;\theta)$. For the terminal nodes, we have:
\begin{align}
\mathbf{h}^t_i &= \mathrm{NN}^{\mathrm{node}}(\mathbf{h}^{\mathrm{tail}}_i;\theta)  \\
   P_\theta(\mathbf{v}_m|\mathbf{h}^{\mathrm{tail}}_i) &= \frac{ \exp (\langle \mathbf{v}_m,\mathbf{h}^t_i \rangle) }{\sum_{v \in \mathbb{\mathbf{v}} } \exp (\langle v,\mathbf{h}^t_i\rangle)} 
\end{align}

At the testing time, we perform greedy decoding to generate nodes from each token in parallel until either terminal node or $T$ nodes are generated.
\section{Computing $\mathbf{B}$ and $\mathbf{A}^{\infty}$} \label{appendix:align_derivative} 
We obtain $\mathbf{B}$ by having:
\begin{align}
   \mathbf{B} = \mathbf{A} [\mathbf{S}_{:,:m}+\mathrm{Diag}(\mathbf{S}_{:,m})]^T;
\end{align}
where $\mathbf{S}_{:,:m}$ takes the submatrix of  $\mathbf{S}$, excluding the last column, and $\mathrm{Diag}(\mathbf{S}_{:,m}$ is the diagonal matrix whose diagonal entries are the last column of $\mathbf{S}$. Intuitively, $[\mathbf{S}_{:,:m}+\mathrm{Diag}(\mathbf{S}_{:,m})]$ can be thought as a Markov transition matrix that pass down the alignment along the generation order, but keep the alignment mass if the node will generate $\emptyset$. We truncate the transition at $T=4$, as we do not expect a subgraph containing more than 4 nodes.

To obtain $\mathbf{A}^{\infty}$, we observe $\mathbf{A}^{\infty}$ should obey the following self-consistency equation:
\begin{align}
   \mathbf{A}^{{\infty}} &=  \mathbf{A}^{{\infty}} \mathbf{S}_{:,:m} +  \mathbf{A} \label{eq:edge_align}
\end{align}
This means, node $j$ is generated from token $k$ iff node $i$ is is generated from token $k$ and node $i$ generates node $j$ or node  $j$ is directly generated from token $k$. This $\mathbf{A}^{\infty}$ can be computed  by initializing $ \mathbf{A}^{{\infty}}=\mathbf{A}$, and repeating Equation~\ref{eq:edge_align} as assignment for $T=4$ times. Intuitively, the $ \mathbf{A}^{{\infty}}$ alignment is passed down along the generation order, while keep getting alignment mass from the first node alignment. As a result, all nodes get assigned alignment. As an alternative motivation, the above algorithmic assignment works as a truncated power series expansion of self-consistency equation solution $ \mathbf{A}^{{\infty}} = [I-\mathbf{S}_{:,:m}]^{-1}\mathbf{A} $.

\section{Ablation on Stochastic Softmax}
Our full model uses the Straight-Through (ST) gradient estimator and the Free Bits trick with $\lambda=10$~\cite{Kingma2017ImprovedVI}.\footnote{The Free Bits trick is used to prevent `the posterior collapse'~\cite{Kingma2017ImprovedVI}. In other words, we use $\max (\lambda, \mathrm{KL}(\mathcal{G}(  \mathbf{W},1)||\mathcal{G}(0,1)) )$ for the KL divergence regularizer.} 
We perform analysis of different variations of the stochastic softmax:  (1)  the soft stochastic softmax is the original one with the entropic regularizer (see Section~\ref{sec:soft});  (2) the rounded stochastic softmax, which selects the highest scored next node from each tokens and concept nodes based on the soft stochastic softmax;\footnote{Such rounding does not provide any guarantee of being a valid generation order, but serves as a baseline. In general, a threshold function (at 0.5) can be applied if the constraints have no structure.} (3) our full model with the ST estimator.  All those models use Free Bits ($\lambda=10$),  while for `no free bits' $\lambda=0$. 
\begin{table}[t!] 
    \begin{center} 
        \begin{tabular}{llll} 
            \hline  Metric   &   Concept & SRL &  Smatch \\\hline
   no free bits  &  83.5  & 66.3 &  66.1 \\ 
           soft    & 84.9 & 68.1& 70.3\\
          rounding     & 87.7  &71.8& 74.5\\
          straight-through     &\bf  88.3 &\bf 73.0&\bf 76.1\\ \hline
        \end{tabular}
    \end{center}
    \vspace{-2ex}
	\caption{\label{table:ablation_ss} Scores with different latent segmentation on the AMR 2.0 test set. Scores are averaged over 2 runs
    }
\end{table}
As we can see in Table~\ref{table:ablation_ss},  there is a substantial gap between using structured ST and the two other versions. This illustrates the need for exposing the parsing model to discrete structures in training. Also, the Free Bits trick appears crucial as it prevents the (partial) posterior collapse in our model. We inspected the logits after training and observed that, without free-bits, the learned $\mathbf{W}$ are very small, in the $[-0.01,+0.01]$ range.   

\section{Greedy Segmentation} \label{append:greedy}
We present a greedy strategy for segmentation that serves as a deterministic baseline. This greedy segmentation can be used in the same way as the rule-based segmentation by setting $\mathbf{S}^\mathrm{mask}$. 

Many nodes are aligned to tokens with the copy mechanism. We could force the unaligned nodes to join its neighbors. This is very similar to the forced alignment of unaligned nodes used in the transition parser of \newcite{Naseem2019RewardingST}. 
We traversal the AMR graph the same way as we do when we produce the masking (Section~\ref{sec:mask}).  During the traversal, we greedily combine subgraphs until one of the constraints is violated: (1) the combined subgraph will have more than 4 nodes; (2) the combined subgraph will have more than 2 copy-able nodes. We present the algorithm recursively (see Algorithm~\ref{alg:heu}).
\begin{algorithm}[ht!] 
\SetAlgoLined
\KwIn{ graph $\mathbf{G}$,  node index $i$}
\KwResult{ segmentation $\mathbf{S}$, $n$,  $z$, $k$}
\vspace{1ex}
  $\mathbf{S} = \mathbf{0} $, $k=i$, $n=1$, $z=\mathbf{z}_i$\;
  \ForAll{$j \in \mathrm{Child}[i]$}{
    \If {$j$ $ \mathrm{not visited}$} {
    $\mathbf{S}', n', z', k' = \mathrm{Greedy}(\mathbf{G},j)$ \;
        $\mathbf{S}=\mathbf{S} + \mathbf{S}'$\;
        \If {$n+n'\leq T \land z'+z\leq 1$ } {
        $\mathbf{S}_{kj}=1$, $n = n + n'$, $z=z+z'$ $k =k'$\; 
        }
    }
}
 \caption{Greedy Segmentation \label{alg:heu}}
\end{algorithm}
Variable $\mathbf{z}_i $ indicates whether node $i$ is copy-able and $T=4$ represent the maximum subgraph size; $n$ denotes the current subgraph size;  $z$ indicates whether the current subgraph contains a copy-able node; $k$ is the last node in the current subgraph, which is used to generate to future nodes in a subgraph. The condition $n+n'\leq T \land z'+z\leq 1$ determines whether we combine the current subgraph rooted at node $i$ and the subgraph rooted at node $j$. Running the algorithm on an AMR graph and the root index will get us the entire segmentation. This greedy method does not require any expert knowledge about AMR, so this should serve as a baseline.

\section{Visualizing Generation Order}\label{append:seg_visual}

\begin{figure}
    \centering  
    \includegraphics[width=\linewidth]{soft} 
    \caption{Example of Soft Stochastic Softmax Latent Generation Order.} \label{fig:soft}
    \centering
    \includegraphics[width=\linewidth]{binary} 
    \caption{Example of Rounded Stochastic Softmax Latent Generation Order.} \label{fig:binary}
    \centering
    \includegraphics[width=\linewidth]{hard} 
    \caption{Example of Hard (straight-through) Stochastic Softmax Latent Generation Order.} \label{fig:hard}
    \centering
    \includegraphics[width=\linewidth]{rule}
    \caption{Example of Rule-Segmentation Stochastic Softmax Latent Generation Order.}  \label{fig:rule}
\end{figure}
In Figures~\ref{fig:soft},~\ref{fig:binary},~\ref{fig:hard},~\ref{fig:rule},\footnote{Incidentally, the greedy segementation produce the same segmentation as rule-based in this example.} we present one example of the induced learned for our three variations of stochastic softmax, and one with rule-based segmentation. The nodes are represented in []. Their gold AMR is:
\begin{lstlisting}[  basicstyle=\small]
(m / make-01
    :ARG0 (t / they)
    :ARG1 (t2 / thing
        :ARG2-of (p / poster-01)
        :ARG0-of (e / express-01
            :ARG1 (t3 / thing
                :ARG1-of (o / opine-01
                    :ARG0 t)))))
\end{lstlisting}
As we can see, the standard stochastic softmax indeed produce soft latent structure that might result in large training/testing gap. Furthermore, the rounding strategy does not satisfy the constraint that every concept node can only be generated from one token or another concept node (i.e. [poster-01] is generated twice, and [thing] is nevery generated.). Meanwhile, the straight through stochastic softmax produce a valid generation order. In Appendix~\ref{append:valid}, we will show the validity formally. It is worth to note that our learned generation order differ from the rule based one. When producing the rule-based segmentation, `(t2 / thing  :ARG0-of (e / express-01 )' took precedence over `(t2 / thing  :ARG2-of (p / poster-01)' due to the order over traversal edges. The learned model, however, figured out that the poster is the thing.

\section{Hyper-Parameters }
We use RoBERTa-large~\cite{Liu2019RoBERTaAR} from ~\newcite{Wolf2019HuggingFacesTS} as contextualised embedding before LSTMs.  BiLSTM for concept identification has 1 layer, and BiLSTM for relation identification has 2 layers. Both have hidden size 1024. Their averaged representation is used for alignment. RelGCN used 128 hidden units and 1 hidden layer (plus one input layer and output layer). Relation identification used 128 hidden units. The LSTM for the locally auto-regressive model is one layer with 1024 hidden units. Adam~\cite{Kingma2015AdamAM} is used with learning rate $3e-4$ and beta=$0.9,0.99$. Early stopping is used with maximum 60 epochs of training. Dropout is set at 0.33. Those hyper-parameters are selected manually, we basically followed the standard model size as in ~\cite{lyu-titov-2018-amr,Zhang2019AMRPA}. We will release the code based on the Allennlp framework~\cite{Gardner2018AllenNLPAD}.

\section{Pre-and-Post processing}
We follow ~\newcite{lyu-titov-2018-amr} for pre-and-post processing. We use CoreNLP~\cite{manning-EtAl:2014:P14-5} for tokenization and lemmatization. The copy-able dictionary is built with the rules based on string matching between lemmas and concept node string as in~\newcite{lyu-titov-2018-amr}. 

For post-processing, wiki tags are added after the named entity being produced in the graph via a look-up table built from the training set or provided by CoreNLP. We also collapse nodes that represent the same pronouns as heuristics for co-reference resolution.
    
\section{Proof of Proposition~\ref{prop:order-converge}} \label{prof:order-converge}
We prove Proposition~\ref{prop:order-converge} based on the Bregman method~\cite{Bregman1967TheRM}. The Bregman's  method solves convex optimization with a set of linear equalities, the setting is as follows:
\begin{align} \label{eq:breg}
\min_{x \in \Bar{\Omega}}\ &\ F(x) &\text{s.t.}\ & Ax=b,\,  
\end{align}
where $F$ is strongly convex and continuously differentiable. Note that $A$ is not our alignment, but denotes a matrix that represents constraints. Two important ingredients are Bregmans' divergence $ D_F(x,y) =  F(x) - F(y) - \langle \nabla F(y) ,x-y \rangle  $, and Bregman's projection: $ P_{\omega,F}(y) = \argmin_{x\in \omega} D_F(x,y)  $, where $\omega$ represents constraint. Now, the Bregman's method works as:
\begin{algorithm}[ht]
\SetAlgoLined
 pick $ y^0 \in \{y\in \Omega |\nabla F(y) = uA,  u \in \mathbb{R}^m\}$\;
  \For{$t\gets1$ \KwTo $\infty$}{
 $ y_0^t \gets y^{t-1} $ \;
   \For{$i\gets1$ \KwTo $m$}{
     $ y_i^t \gets P_{A_ix=b_i,F}(y^t_{i-1})$ \;
    }
    
 $ y^t \gets y_m^{t} $ \;
 }
 \caption{Bregman's method for solving convex optimization over linear constraints}
\end{algorithm}
Intuitively, Bregman's method performs alternating projections w.r.t. each constraints iteratively. After each projection, the score $F$ is lowered by the construction of Bregman's projection. Such alternating projections eventually converge, and with careful initialization solves the optimization problem.  
\begin{theorem}[\protect{\cite{Bregman1967TheRM}}]  \label{tm:bregman}
$ \lim_{t\rightarrow \infty}   y^t  $ solves the optimization problem~\ref{eq:breg}.
\end{theorem}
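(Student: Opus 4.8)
The plan is to prove convergence by exploiting two structural properties of the Bregman projection onto a single hyperplane $\{x : A_i x = b_i\}$, and then to close the argument with a compactness-and-continuity step that identifies the limit as the minimizer.

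First I would establish the two building blocks. The projection $x = P_{A_i x = b_i, F}(y)$ is characterized by the Lagrange condition $\nabla F(x) - \nabla F(y) = \lambda A_i$ for some scalar $\lambda$; thus \emph{(i)} each sub-step shifts the gradient by a multiple of the active row $A_i$. Since $y^0$ is chosen so that $\nabla F(y^0) \in \mathrm{row}(A) = \{uA : u \in \mathbb{R}^m\}$, property (i) guarantees that $\nabla F$ evaluated at every iterate and sub-iterate stays in $\mathrm{row}(A)$. Second, I would prove the exact Pythagorean identity for affine projections: for any $z$ with $A_i z = b_i$ and $x = P_{A_i x = b_i, F}(y)$,
\begin{align}
D_F(z, y) = D_F(z, x) + D_F(x, y),
\end{align}
which follows from the three-point identity $D_F(z,y) = D_F(z,x) + D_F(x,y) + \langle \nabla F(x) - \nabla F(y), z - x\rangle$ once one notes that the cross term vanishes: $\nabla F(x) - \nabla F(y) = \lambda A_i$ and $A_i(z - x) = b_i - b_i = 0$.

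With these in hand, fix any feasible $z$ (so $A_i z = b_i$ for every $i$). Applying the Pythagorean identity at each sub-step of a sweep and telescoping shows that $D_F(z, y^t)$ is non-increasing and that $\sum_{t,i} D_F(y_i^t, y_{i-1}^t) < \infty$; hence $D_F(y_i^t, y_{i-1}^t) \to 0$. Strong convexity of $F$ upgrades this to $\|y_i^t - y_{i-1}^t\| \to 0$ and makes the sublevel sets of $D_F(z, \cdot)$ bounded, so $\{y^t\}$ is bounded and admits a subsequence $y^{t_k} \to y^*$. Because consecutive differences vanish, the intermediate iterates along this subsequence share the limit $y^*$; since each $y_i^t$ satisfies $A_i y_i^t = b_i$, passing to the limit gives $A y^* = b$. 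By continuity of $\nabla F$ and closedness of the subspace $\mathrm{row}(A)$, property (i) gives $\nabla F(y^*) = u^* A$ for some $u^*$. Feasibility together with $\nabla F(y^*) = u^* A$ are exactly the KKT conditions for~(\ref{eq:breg}), so $y^*$ is the unique minimizer. Taking $z = y^*$ in the monotonicity statement, $D_F(y^*, y^t)$ is non-increasing and tends to $0$ along the subsequence, hence on the whole sequence, yielding $y^t \to y^*$.

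The main obstacle I expect is the joint bookkeeping of the two invariants --- keeping $\nabla F(y^t)$ inside $\mathrm{row}(A)$ while driving the residual $A y^t - b$ to zero --- and, concretely, establishing the Pythagorean identity, since it is the hinge that simultaneously forces monotone descent (through the non-negative increments $D_F(y_i^t, y_{i-1}^t)$) and certifies the limit as optimal. The compactness-and-continuity endgame is routine once strong convexity is invoked to turn divergence bounds into norm bounds; the only point needing care there is deriving boundedness of $\{y^t\}$ from the level-boundedness of $D_F(z, \cdot)$, which is precisely where the strong-convexity hypothesis on $F$ is genuinely used.
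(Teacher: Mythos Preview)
The paper does not prove this theorem at all: it is stated as a classical result and attributed to \cite{Bregman1967TheRM}, and the paper immediately uses it as a black box to establish Proposition~\ref{prop:order-converge}. Your proposal, by contrast, supplies an actual argument, and the sketch you give is essentially the standard Bregman/Bauschke--Borwein proof of convergence of cyclic Bregman projections: the gradient-in-$\mathrm{row}(A)$ invariant, the affine Pythagorean identity, Fej\'er monotonicity of $D_F(z,y^t)$ for feasible $z$, summability of the sub-step divergences, and a compactness-plus-KKT identification of the limit. The details you record (the three-point identity, the vanishing cross term via $A_i(z-x)=0$, strong convexity giving level-boundedness and norm control) are correct.

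So there is nothing to compare against in the paper's own text; what you have is a sound self-contained proof where the paper only gives a citation. One small caveat worth tightening: the paper states the problem on $\bar\Omega$, so in a fully rigorous write-up you should say a word about why all iterates remain in (the interior of) the effective domain so that $\nabla F$ is defined throughout, and why the limit point also lies in $\bar\Omega$. In the paper's concrete instance $F$ is the entropy and the projections are softmax normalizations, so this is automatic, but it is the one place your general argument silently uses a domain assumption.
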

\begin{proof}[Proof of Proposition \ref{prop:order-converge}]
We show Proposition \ref{prop:order-converge} by showing the Algorithm defined by equations ~\ref{eq:init}, ~\ref{eq:column}, ~\ref{eq:fix} and  ~\ref{eq:row} implements Bregman's method. Then, Proposition \ref{prop:order-converge}  follows from Theorem \ref{tm:bregman}.

Now, we build Bregman's method for our optimization problem~\ref{eq:reg}. For simplicity, we focus on the linear algebraic structure, but do not strictly follow the standard matrix notation. We have $\mathbf{O}$ as variable, and $F(\mathbf{O}) = - \langle \widetilde{ \mathbf{W}}, \mathbf{O} \rangle + \tau \langle \mathbf{O}, \log \mathbf{O} -1 \rangle$\footnote{This regularizor differ from the original one in ~\ref{eq:reg} by a constant m+n, due to constraints. So, the optimization problem is equivalent.}. For initialization, we have $\nabla F( \mathbf{O}) =  -  \widetilde{ \mathbf{W}} +\tau  \log \mathbf{O}$. Take $u$ = 0, we have $\mathbf{O}^{(0)}= \exp (\frac{ \widetilde{ \mathbf{W}}}{\tau}) \iff \log \mathbf{O}^{(0)} = \frac{ \widetilde{ \mathbf{W}}}{\tau} $. This corresponds to initialization step as in our Equation~\ref{eq:init}. Then, we iterate through constraints to perform Bregman's projection. First, the column normalization constraints $ \forall j < m, \sum_{i=0}^{n+m-1} \mathbf{O}_{ij} =1$. Take a $j<m$, we need to compute $P_{\sum_{i=0}^{n+m-1} \mathbf{O}_{ij} =1,F}(\mathbf{O}^(t))$. A very important property is that our $F( \mathbf{O})  =\sum_{ij} f_{ij}(\mathbf{O}_{ij})$, where $f_{ij}(\mathbf{O}_{ij}) = - \widetilde{ \mathbf{W}}_{ij}\mathbf{O}_{ij} +\tau \mathbf{O}_{ij} (\log \mathbf{O}_{ij} -1) $. Moreover, $D_F(x,y) = 0 \iff x=y$. Therefore, for variables that is not involved in the constraints, they are kept the same. To simplify notation, we extend the domain of $F$ to parts of the variable. e.g., $F(\mathbf{O}_{:,j} )=\sum_{i} f_{ij}(\mathbf{O}_{ij})$. Now, let's focus on column $j$, we have:
\begin{align}
   & \argmin_{x: \sum_i x_i =1 }  F(x) - F(\mathbf{O}_{:,j}) - \langle \nabla F(\mathbf{O}_{:,j} ) ,x-\mathbf{O}_{:,j} \rangle  \\
    = & \argmin_{x: \sum_i x_i =1 }   - \langle \widetilde{ \mathbf{W}}_{:,j}, x \rangle + \tau \langle x, \log x -1 \rangle   \nonumber \\
    &- \langle \nabla F(\mathbf{O}_{:,j} ) ,x \rangle \\
    = & \argmin_{x: \sum_i x_i =1 }   - \langle \widetilde{ \mathbf{W}}_{:,j}, x \rangle + \tau \langle x, \log x -1 \rangle   \nonumber\\
    &- \langle-  \widetilde{ \mathbf{W}}_{:,j} +\tau  \log \mathbf{O}_{:,j} ,x \rangle  \\
    = & \argmin_{x: \sum_i x_i =1 } \tau \langle x, \log x -1 \rangle   + \langle \tau  \log \mathbf{O}_{:,j} ,x \rangle   \\
    = & \argmin_{x: \sum_i x_i =1 } \langle x, \log x -1 \rangle   + \langle  \log \mathbf{O}_{:,j} ,x \rangle   \\
  =& \mathrm{Softmax}(\log \mathbf{O}_{:,j} )
\end{align}
since when iterating over those mutually non-overlapping constraints, the non-focused variables are always kept the same. It is hence equivalent to compute them in parallel, which is expressed in our column normalization step ~\ref{eq:column}. Similarly, we can derive row normalization step ~\ref{eq:row}. Therefore, our algorithm is an implementation of Bregman's method, and Proposition \ref{prop:order-converge} follows from Theorem \ref{tm:bregman}.
\end{proof}

\section{Generation Order is Discrete by LP} \label{append:valid}
If $\mathbf{O}( \widetilde{ \mathbf{W}},0) $ is integral valued, it  belongs to $\mathcal{O}$ by definition. In most cases, there is no guarantee that the linear programming in the relaxed space yields a solution that is also an integer.  However, in our cases, we have the following result:
\begin{prop}\label{prop:uni}
With probability 1, a unique $\mathbf{O}( \widetilde{ \mathbf{W}},0) \in \{0,1\}^{(n+m) \times (m +1)} $, where $\mathbf{O}( \widetilde{ \mathbf{W}},0)  $ is defined in Equation~\ref{eq:reg}.
\end{prop}
Intuitively, this is a generalization of classical result about perfect matching on bipartite graph~\cite{Integer-Programming}. To prove this, we need the following theorems from integer linear programming.
\begin{theorem}[{\cite[page 130,133]{Integer-Programming}}] l\label{thm:uni}
Let $A$ be an $q \times p$ integral matrix. For all integral vectors $ d, l, u  $ and  $c \in \mathbb{R}^p$,  $ \max\{\langle c,x\rangle  :  Ax = d, l \leq x \leq u\}$ is attained by an integral vector x  if and only if $ A$ is totally unimodular.\footnote{$A$ is totally unimodular if every square submatrix has determinant
$0,\pm 1$. We combined a few theorems and definitions from ~\newcite{Integer-Programming} into this theorem.}
\end{theorem}
Note that this theorem does not say all the solution is integer, nor it's unique. However, one should understand this limitation as some degenerate case of $c$. However, a total unimodular matrix does characterize the convex hull of its' integral points. To prove this, we need an additional lemma. 
\begin{lemma}[{\cite[page 21]{Integer-Programming}}] l\label{lemme:convex_hull}
Let $S \in \mathbb{R}^n$ and $c \in \mathbb{R}^n$. Then $\sup \{\langle c,s \rangle  :s\in S \} = \sup \{\langle c,s \rangle  :s\in \mathrm{Conv}(S) \}$. Furthermore, the supremum of $\langle c,s \rangle $ is attained over $S$ if and only if it is
attained over $\mathrm{Conv}(S)$.
\end{lemma}
where $\mathrm{Conv}(S)$ is the convex hull of $S$. Now we have the following proposition:
\begin{prop}\label{prop:uni_convex_hull}
Let $A$ be an $q \times p$ integral matrix. For all integral vectors $ d, l, u  $ ,and  $c \in \mathbb{R}^p$ such that $\{ x \in \{0,1\}^p |  Ax = d, l \leq x \leq u\} $ is a finite set,  $\{ x \in \mathbb{p} |  Ax = d, l \leq x \leq u\} = \mathrm{Conv}(\{ x \in \{0,1\}^p |  Ax = d, l \leq x \leq u\})$  if and only if $ A$ is totally unimodular.
\end{prop}
In other words, we know the LP relaxation is the convex hull.
\begin{proof}
By theorem~\ref{thm:uni}, $ A$ is totally unimodular is equivalent to maximum is attained by an integer solution. Clearly, the LP relaxation contains the convex hull. So, we only need to show that the LP relaxation does not contain any more points. Now suppose the LP relaxation contains another point $x'$ that's not in the convex hull. Since, we restrict our discussion on finite set of integer, both the $\{x'\}$ and the convex hull is closed set. Then by separation theorem, we have a vector $c$ s.t. $\langle c,x' \rangle >  \langle c,x \rangle \forall x \in \mathrm{Conv}(\{ x \in \{0,1\}^p |  Ax = d, l \leq x \leq u\}) $,  which contradicts to lemma~\ref{lemme:convex_hull}.
\end{proof}

\begin{theorem}[\protect{\cite[page 133,134]{Integer-Programming}}]\label{tm:bi}
A $0,\pm1$ matrix $A$ with at most two nonzero elements in
each column is totally unimodular if and only if rows of $A$ can be partitioned into two sets, red and blue, such that the sum of the red rows minus the sum of the blue rows is a vector whose entries are $0,\pm1$ (admits row-bicoloring).
\end{theorem}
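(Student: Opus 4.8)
The plan is to prove both implications, treating the reverse direction (row-bicolorability $\Rightarrow$ total unimodularity) as the substantive one and the forward direction (total unimodularity $\Rightarrow$ row-bicolorability) via a signed-graph balance argument. For the reverse direction I would argue by induction on the size $k$ of a square submatrix $B$ of $A$, showing $\det B \in \{0,\pm1\}$. The base case $k=1$ is immediate since entries are $0,\pm1$. For the inductive step, distinguish two cases according to whether $B$ has a column with at most one nonzero. If some column of $B$ has no nonzero then $\det B = 0$; if it has exactly one nonzero entry ($\pm1$), Laplace-expand along that column to get $\det B = \pm \det B'$ with $B'$ a $(k-1)\times(k-1)$ submatrix, and invoke the inductive hypothesis. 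The remaining case is that every column of $B$ has exactly two nonzeros, and this is where the hypothesis ``at most two nonzeros per column of $A$'' is essential: a column already carrying two nonzeros inside $B$ cannot have had any further nonzero row deleted in passing from $A$ to $B$, so the original row-bicoloring of $A$, restricted to the rows of $B$, still satisfies the defining inequality on every (now full) column of $B$.

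Given that, I would introduce the sign vector $v \in \{\pm1\}^{k}$ with $v_i = +1$ for red rows and $v_i = -1$ for blue rows, and examine $v^{\top} B$. Its $j$-th entry equals the red-minus-blue column sum $\sum_i v_i a_{ij}$, which lies in $\{0,\pm1\}$ by the restricted bicoloring. For a column with exactly two nonzeros $a_{pj},a_{qj}\in\{\pm1\}$, a short case check over the colors of rows $p$ and $q$ shows this entry can only be $0$: same color forces opposite signs and different colors force equal signs (any other configuration would produce a $\pm2$ and violate the bicoloring), and both surviving configurations yield exact cancellation. Hence $v^{\top}B = 0$ with $v \neq 0$, so the rows of $B$ are linearly dependent and $\det B = 0$, completing the induction.

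For the forward direction I would model $A$ as a signed graph whose vertices are the rows and whose edges are the columns with two nonzeros, each edge joining its two nonzero rows and labelled \emph{same-sign} or \emph{opposite-sign}; columns with at most one nonzero impose no constraint. A valid bicoloring is then exactly a $2$-coloring in which same-sign edges join differently-colored endpoints and opposite-sign edges join equally-colored endpoints, and such a coloring exists iff every cycle is balanced in the corresponding parity sense. I would prove the contrapositive: if no valid coloring exists, select an unbalanced cycle, form the square submatrix on its vertex-rows and edge-columns (equal in number, so the submatrix is square), and show its determinant is $\pm2$. Here the relevant structure is that the signed cycle-incidence matrix has only two contributing permutations, so its determinant is a $\pm1\pm1$ combination that equals $0$ when the cycle is balanced and $\pm2$ when it is unbalanced; the latter contradicts total unimodularity. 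Alternatively, this direction follows directly from the Ghouila--Houri characterization applied to the full set of rows.

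The main obstacle I anticipate is twofold. In the reverse direction it is getting the submatrix-restriction step exactly right, namely verifying that the bicoloring inequality survives on every full column of $B$ (the sole place the two-nonzero hypothesis enters, and the reason a single global bicoloring suffices rather than one per row subset). In the forward direction it is the clean evaluation of the unbalanced-cycle determinant as $\pm2$, which requires careful sign bookkeeping around the cycle to confirm that balance corresponds precisely to the two permutation products cancelling.
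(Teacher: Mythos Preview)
The paper does not prove this theorem at all: it is quoted verbatim as a classical result from the integer-programming literature (the cited textbook, pages~133--134), and the paper merely \emph{applies} it to establish that the specific constraint matrix arising from the generation-order polytope is totally unimodular. So there is no ``paper's own proof'' to compare against; your proposal is a self-contained argument for a result the authors take off the shelf.

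That said, your argument is sound and is essentially the standard textbook proof. The reverse direction is clean: the crucial observation that in the ``every column of $B$ has exactly two nonzeros'' case those two nonzeros exhaust the nonzeros of the corresponding column of $A$ (so the global bicoloring restricts correctly) is exactly right, and the parity argument forcing $v^{\top}B=0$ is watertight. For the forward direction, invoking Ghouila--Houri on the full row set is the quickest route and entirely legitimate here; the signed-cycle determinant calculation you sketch also works but, as you note, needs careful bookkeeping to confirm that the two nonvanishing permutation terms reinforce (giving $\pm2$) precisely when the cycle is unbalanced. Either way the direction goes through. If you want to stay closer to the paper's spirit, you could simply cite the result as the authors do rather than reproving it.
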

Our $ \mathbf{O} $ should be the column vector $x$, and constraints should be represented by a matrix $A$. In particular, we view $  \mathbf{O}  $ as a column vector, but still access the item by $\mathbf{O}_{ij}$.\footnote{Alternatively, one could have a  vector $x $ and $x_{ i(m+1)+j} = \mathbf{O}_{ij} $. However, this will gets clumsy.} The matrix $A \in \{0,\pm 1\}^{(m+ (m+n)) \times ((n+m) (m+1))}$. $A_{:,ij}$ denotes the constraints involving $\mathbf{O}_{ij}$ . The first $m$ rows of $A$ correspond to $  \forall  j < m, \sum_{i=0}^{n+m-1} \mathbf{O}_{ij} =1 $, and the remaining $m+n$ rows correspond to $\forall i, \sum_{j=0}^{m} \mathbf{O}_{ij} =1 $. Therefore, we have $\forall k <m , j < m , i, A_{k,ij}= \delta_{j,k}$ and $\forall k\geq m , j  , i A_{k,ij}=  \delta_{i,k-m}$, else $A_{k,ij}= 0$, where $ \delta_{j,k} = [[j==k]]$. We have the linear constraints in standard form as $A\mathbf{O} = \mathbf{1}$.
\begin{lemma}
The $A$ defined above is totally unimodular.
\end{lemma}
\begin{proof}
First, we show $A$ admits row-bicoloring. We color the first $m$ rows red, and remianing $n+m$ rows blue. The sum of red rows is: $R_{ij}=\sum_{k=0 }^{m-1} A_{k,ij} =\sum_{k=0 }^{m-1}\delta_{j,k}= [[j<m]] $ and the sum of blues is $B_{ij}=\sum_{k=m }^{2m+n-1} A_{k,ij}= \sum_{k=m }^{2m+n-1} \delta_{i,k-m} = 1$. Therefore,  $R_{ij} - B_{ij} = [[j==m]] \in \{0,\pm1\}$, and $A$ admits a row-bicoloring. Since $A$ has only $0,\pm 1$ value, and one variable in $\mathbf{O}$ at most participates in two constrains (in-coming and out-going), by theorem ~\ref{tm:bi}, $A$ is totally unimodular. 
\end{proof}
Now, we prove proposition~\ref{prop:uni}.
\begin{proof} \label{prof:uni}
We have $A$ being totally unimodular. We have  $c=\widetilde{W}$ , $l=0,u=1$, by theorem~\ref{thm:uni}, the LP solutions contain an integer vector. Since the Gumbel distribution has a positive and differentiable density,  by \cite[Proposition~3]{Paulus2020GradientEW},  $ \argmax_{  \mathbf{O} \in \mathcal{O}} \langle \widetilde{ \mathbf{W}}, \mathbf{O} \rangle $  yields a unique solution with probability 1. Clearly, this solution is the only integer solution in our LP solutions. Now, suppose another non-integer solution exist. We know the linear programming domain is the convex hull by proposition~\ref{prop:uni_convex_hull}. Clearly, another integer solution exist, which contradicts the uniqueness of integer solution. Hence, the  $\mathbf{O}( \widetilde{ \mathbf{W}},0)$ yields a unique integer solution with probability 1.
\end{proof}

\bibliography{eacl2021}
\bibliographystyle{acl_natbib}